\documentclass{article} 
\usepackage{iclr2027_conference,times}

\usepackage{amsmath,amsfonts,bm}

\def\1{\bm{1}}

\def\vzero{{\bm{0}}}
\def\vone{{\bm{1}}}

\def\va{{\bm{a}}}
\def\vb{{\bm{b}}}

\def\ve{{\bm{e}}}

\def\vq{{\bm{q}}}

\def\vv{{\bm{v}}}

\def\vx{{\bm{x}}}
\def\vy{{\bm{y}}}

\def\mA{{\bm{A}}}

\def\mD{{\bm{D}}}

\def\mK{{\bm{K}}}

\def\mQ{{\bm{Q}}}

\def\mT{{\bm{T}}}

\def\mV{{\bm{V}}}

\def\mY{{\bm{Y}}}

\DeclareMathAlphabet{\mathsfit}{\encodingdefault}{\sfdefault}{m}{sl}
\SetMathAlphabet{\mathsfit}{bold}{\encodingdefault}{\sfdefault}{bx}{n}

\def\sA{{\mathbb{A}}}
\def\sB{{\mathbb{B}}}

\def\sI{{\mathbb{I}}}

\def\sM{{\mathbb{M}}}
\def\sN{{\mathbb{N}}}

\def\sR{{\mathbb{R}}}

\newcommand{\R}{\mathbb{R}}

\newcommand{\softmax}{\mathrm{softmax}}

\DeclareMathOperator*{\argmax}{arg\,max}

\usepackage{amsmath}
\usepackage{mathtools}

\usepackage{booktabs}
\usepackage{tabularx}
\usepackage{array}
\usepackage{caption}
\usepackage{hyperref}
\usepackage{url}
\usepackage{cleveref}
\usepackage[store-sets-label]{keytheorems}

\newkeytheorem{theorem}[
  name=Theorem,
  parent=section,
  refname={theorem,theorems},
  Refname={Theorem,Theorems},
]

\newkeytheorem{lemma}[
  name=Lemma,
  sibling=theorem,
  refname={lemma,lemmas},
  Refname={Lemma,Lemmas},
]

\newkeytheorem{proposition}[
  name=Proposition,
  sibling=theorem,
  refname={proposition,propositions},
  Refname={Proposition,Propositions},
]

\newkeytheorem{corollary}[
  name=Corollary,
  sibling=theorem,
  refname={corollary,corollaries},
  Refname={Corollary,Corollaries},
]

\newkeytheorem{definition}[
  name=Definition,
  sibling=theorem,
  refname={definition,definitions},
  Refname={Definition,Definitions},
]

\newkeytheorem{assumption}[
  name=Assumption,
  sibling=theorem,
  refname={assumption,assumptions},
  Refname={Assumption,Assumptions},
]

\newkeytheorem{hypothesis}[
  name=Hypothesis,
  sibling=theorem,
  refname={hypothesis,hypotheses},
  Refname={Hypothesis,Hypotheses},
]

\newkeytheorem{example}[
  name=Example,
  sibling=theorem,
  refname={example,examples},
  Refname={Example,Examples},
]

\newkeytheorem{remark}[
  name=Remark,
  sibling=theorem,
  refname={remark,remarks},
  Refname={Remark,Remarks},
]

\usepackage[most]{tcolorbox}

\newcounter{boxedproblem}

\crefname{boxedproblem}{problem}{problem}
\Crefname{boxedproblem}{Problem}{Problem}

\newtcolorbox[use counter=boxedproblem]{boxedproblem}[3][]{%
  enhanced,
  breakable,
  title={Problem \theboxedproblem:\quad{#3}},
  label={#2},
  colback=black!2,
  colframe=black!30,
  coltitle=black,
  fonttitle=\bfseries,
  boxrule=0.5pt,
  arc=1.5pt,
  left=7pt,
  right=7pt,
  top=6pt,
  bottom=6pt,
  attach title to upper={\par\smallskip},
  #1
}

\newcommand{\seth}{\ensuremath{\mathsf{SETH}}}
\newcommand{\sat}{\ensuremath{\mathsf{SAT}}}

\newcommand{\AbsAtt}{\ensuremath{\mathsf{AbsAtt}}}
\newcommand{\RelAtt}{\ensuremath{\mathsf{RelAtt}}}
\newcommand{\AbsPreAtt}{\ensuremath{\mathsf{AbsPreAtt}}}
\newcommand{\RelPreAtt}{\ensuremath{\mathsf{RelPreAtt}}}
\newcommand{\AbsPreBatchAtt}{\ensuremath{\mathsf{AbsPreBatchAtt}}}
\newcommand{\RelPreBatchAtt}{\ensuremath{\mathsf{RelPreBatchAtt}}}
\newcommand{\SparseAtt}{\ensuremath{\mathsf{SparseAtt}}}
\newcommand{\bcp}{\ensuremath{\mathsf{CP}}}
\newcommand{\PreGapBCP}{\ensuremath{\mathsf{PreCP}}}

\DeclareMathOperator{\Att}{Att}
\DeclareMathOperator{\diag}{diag}
\DeclareMathOperator{\bigO}{\mathcal{O}}
\DeclareMathOperator{\littleO}{\textit{o}}
\DeclareMathOperator{\littleOmega}{\omega}
\DeclareMathOperator{\bigTheta}{\Theta}
\DeclareMathOperator{\bigOmega}{\Omega}
\DeclareMathOperator{\Schatten}{\mathcal{S}}
\DeclareMathOperator{\Exp}{Exp}

\title{Efficiently Approximating Attention Is Hard}

\author{%
\textbf{%
Lukas Haverbeck$^{\,\spadesuit}$\thanks{Corresponding author: \texttt{lukas.haverbeck@rwth-aachen.de}}
\quad
Carmen Amo Alonso$^{\,\clubsuit}$
\quad
Andres Felipe Posada-Moreno$^{\,\spadesuit}$%
}\\
\textbf{%
Sebastian Trimpe$^{\,\spadesuit}$
\quad
Marco Pavone$^{\,\clubsuit}$%
}\\[1em]
$\spadesuit$\; Institute for Data Science in Mechanical Engineering, RWTH Aachen University\\[0.5em]
$\clubsuit$\; Autonomous Systems Lab, Stanford University
}

\iclrpreprintcopy
\begin{document}

\maketitle

\vspace{2em}
\begin{abstract}
    Softmax attention is ubiquitous in modern machine learning, but its quadratic scaling with sequence length makes it costly. To reduce this cost, attention is often approximated with fast algorithms, which incur error but can still perform well in practice and on \emph{some} inputs.
    At the same time, the growing diversity of attention applications makes approximation guarantees that do not depend on particular input structure a compelling target.
    For such uniform guarantees over \emph{all} inputs, known runtime lower bounds rule out fast algorithms for near-exact attention, but leave open the practically important regime: \emph{is there an efficient algorithm with even a modest uniform approximation guarantee?}
    We answer this question negatively. Under standard complexity-theoretic assumptions, no truly subquadratic algorithm can approximate attention with any nontrivial additive or relative guarantee uniformly over all inputs.
    This impossibility holds in the mildest parameter regime for which known algorithms do not already achieve strong approximation guarantees in near-linear time, and extends to practically relevant relaxations: even after polynomial preprocessing of the KV cache, no efficient algorithm can obtain a nontrivial uniform approximation guarantee, or identify a small set of keys receiving substantial attention under sparsity.
    Overall, our results settle the computational limits of uniform attention approximation.
\end{abstract}
\newpage
\section{Introduction} \label{sec:introduction}
The Transformer architecture has become the de facto standard for the most challenging sequence modeling problems across various domains \citep{vaswani2017attention,devlin2019bert,dosovitskiy2020image,gulati2020conformer,rives2021biological,zhou2021informer}, owing largely to its ability to model complex relationships between different points in a sequence. The key architectural component enabling this is \emph{softmax attention}, which transforms a sequence of vectors based on pairwise comparisons. While attention makes Transformers powerful in practice, its standard implementation scales quadratically with the sequence length, making it the principal bottleneck of Transformer inference and often prohibitively expensive on long sequences \citep{beltagy2020longformer,dao2024flashattention}.

To address this problem, various methods aim to approximate attention with fast algorithms, which can work well empirically on typical inputs \citep{kitaev2020reformer,xiong2021nystromformer,tay2020sparse,roy2021routing} or admit formal guarantees under structural assumptions on the inputs \citep{vyas2020fast,chen2021scatterbrain,han2024hyperattention}. In practice, however, Transformers are deployed across increasingly diverse applications and input distributions, making it difficult to test approximation quality exhaustively or to verify that data-dependent assumptions are satisfied. That makes algorithms particularly attractive if they can provide approximation guarantees \emph{uniformly over all inputs}.

Remarkably strong approximation is already possible with uniform guarantees over sufficiently low-dimensional inputs with sufficiently small norms. In this regime, existing algorithms run in near-linear time, and guarantee error vanishing at polynomial or even super-polynomial rates in the sequence length \citep{alman2023fast,schroder2026wildcat}. Outside this regime, however, known lower bounds rule out such strong guarantees. More precisely, the best known lower bounds show that truly subquadratic algorithms, for some sequences of length \(n\), must incur error \(n^{-C}\), where \(C\gg 0\) is a large constant \citep{alman2023fast}. While these results rule out extreme accuracy with fast algorithms, they leave a substantial gap at the approximation accuracy most relevant in practice. In particular, existing lower bounds do not even rule out a linear-time algorithm that uniformly guarantees error at most, say, \(n^{-100}\). For all practical purposes, such high accuracy is effectively indistinguishable from exact computation, while empirical evidence suggests that Transformers remain robust even under much cruder approximations \citep{liu2024kivi,lin2022fqvit,michel2019sixteen}. This raises the question whether attention can be approximated efficiently once one tolerates a modest amount of error. We therefore ask:

\begin{center}
\textbf{\emph{Can attention be approximated efficiently with even a coarse uniform error guarantee?}}
\end{center}

We show that the answer is \textbf{\emph{no: such an algorithm does not exist}}. Concretely, we strengthen the proof of \citet{alman2023fast} to show that, under standard complexity-theoretic assumptions, no algorithm running in truly subquadratic\footnote{By ``truly'' subquadratic, we mean polynomially faster, that is, \(\bigO(n^{2-q})\) for some constant \(q>0\).} time can give \emph{any} nontrivial approximation guarantee uniformly over all inputs. Our result holds for both additive and relative error notions. Moreover, it applies under minimal assumptions on the data dimension and norms: even the slightest relaxation of these assumptions enters the setting in which existing algorithms supply near-linear runtime and vanishingly small error. Consequently, our new lower bound establishes a sharp transition of uniform attention approximation from possible at high accuracy to impossible at any accuracy.

Additionally, we study whether preprocessing or sparse attention structure can accelerate subsequent attention computations with uniform guarantees. Such approaches are common in practice, where one is often willing to incur the quadratic cost of initially processing the input, as well as additional preprocessing overhead, in exchange for faster inference on subsequent queries \citep{li2024snapkv,tang2024quest,chen2025magicpig,liu2025retrievalattention}. We show that, even in this relaxed setting, uniform approximation remains hard. Under the same mild assumptions as before, it is impossible to preprocess the input in polynomial time and subsequently approximate attention in truly sublinear time per token while guaranteeing either a nontrivial approximation of the attention output or, under significant sparsity, retrieval of a small set of keys capturing a constant fraction of the attention.

\paragraph{Contributions.} We provide new lower bounds that settle the limits of uniform attention approximation, closing the gap left by prior work and extending them to autoregressive decoding with preprocessing and sparsity. Beyond the regimes where uniform high-precision approximation is already efficiently possible, we show that approximate attention must exploit data-dependent structure.
\section{Related work} \label{sec:related-work}
\paragraph{Fast approximate attention.} The quadratic cost of the standard attention implementation has motivated extensive work on faster approximations. For instance, existing approaches exploit sparsity \citep{child2019sparse,zaheer2020bigbird,tay2020sparse,kitaev2020reformer,roy2021routing}, low-rank structure \citep{wang2020linformer,xiong2021nystromformer,chen2021scatterbrain}, or other data-dependent properties of the attention matrix \citep{zandieh2023kdeformer,han2024hyperattention,aliakbarpour2026support} for efficient attention approximation. These methods can substantially reduce computation while preserving model quality, but they demonstrate performance often only empirically for typical inputs or give guarantees that depend on structural properties of the input. In particular, most approaches do not provide uniform guarantees over all inputs even though such guarantees are appealing given the use of attention across domains as varied as language, vision, speech, biology, and time-series forecasting \citep{devlin2019bert,dosovitskiy2020image,gulati2020conformer,rives2021biological,zhou2021informer}. This raises \textbf{Question~(A):} \emph{is the dependence on input structure merely a limitation of existing algorithms or fundamentally unavoidable in practice?}

\paragraph{Uniform approximation guarantees.} Another line of work gives approximation guarantees uniformly over a broad class of inputs, typically satisfying a fixed norm bound without requiring additional instance-specific structure. For instance, \citet{choromanski2021performer} prove uniform convergence of a random-feature approximation when the query and key norms are bounded, although the number of features depends strongly on the radius and desired accuracy. \citet{alman2023fast} obtain a near-linear time algorithm with approximation error decaying at a rate of \(n^{-c}\) in the sequence length \(n\) for a configurable constant \(c>0\) when the entries of the key and query vectors have sufficiently small magnitude. More recently, \citet{schroder2026wildcat} achieve even super-polynomial error decay in the sequence length when keys and queries have sufficiently small norms. While these algorithms give approximation guarantees uniformly over all inputs, they have in common that they only result in fast algorithms for sufficiently low-dimensional inputs with sufficiently small norms. This raises \textbf{Question~(B):} \emph{are these restricted parameter regimes merely a limitation of existing algorithms or are they the broadest possible for uniform approximation guarantees?}

\paragraph{Runtime lower bounds.} Existing runtime lower bounds provide evidence that approximating attention is hard, but do not conclusively answer Questions~\textbf{(A)} or \textbf{(B)}. Early work by \citet{duman2023computational} establishes that near-exact attention computation requires near-quadratic runtime when the input dimension and norms are sufficiently large. More directly, \citet{alman2023fast} show that, outside the parameter regime in which their algorithm guarantees arbitrary polynomial error decay, there exists some constant \(C\gg 0\) such that no truly subquadratic algorithm can approximate attention with error at most \(n^{-C}\) uniformly over all input sequences of length \(n\). These results concern high-precision approximation, since \citet{alman2023fast} quantify \(C\) only existentially, while \citet{duman2023computational} only rule out fast algorithms with even faster-decaying additive error. More recent lower bounds by \citet[Thm.~5.4]{gupta2026subquadratic} rule out constant-error approximation, but only for inputs with extremely large norms, while their complementary results in more realistic regimes \citep[Thm.~5.7]{gupta2026subquadratic} again require extreme precision. In contrast, our new lower bound rules out \emph{any} nontrivial additive or relative error guarantee, thereby resolving Questions~\textbf{(A)} and \textbf{(B)}. At a technical level, our results strengthen the proof of \citet{alman2023fast} and therefore build on the lower bounds of \citet{rubinstein2018hardness} for the approximate closest pair problem.

\paragraph{Approximation with preprocessing and sparsity.} In practice, the runtime of Transformers on long sequences is often not dominated by the processing of the initial input but instead by the cost of subsequent queries during generation, since the former can be parallelized across the sequence whereas the latter cannot. Because of that, significant effort has gone into designing algorithms that speed up generation, often by preprocessing the sequence into some data structure that subsequently exploits sparsity to attend only over a small subset of the original input  \citep{li2024snapkv,tang2024quest,chen2025magicpig,liu2025retrievalattention}. These methods incur the full quadratic cost of attention before generation, as well as the added cost of preprocessing, in exchange for faster generation. Prior lower bounds leave open whether relaxing the computational problem in this way enables fast attention approximation with uniform guarantees during generation. We address this gap by extending our runtime lower bounds to the relaxed problem settings of approximate attention with preprocessing and retrieval under sparsity.
\newpage
\section{Nontrivial guarantees are impossible under mild assumptions} \label{sec:hardness}
In this section, we ask whether a fast algorithm can approximate attention uniformly over all inputs. After formally defining the problem in \Cref{subsec:hardness:problem-formulation}, we explain in \Cref{subsec:hardness:prior-work} why existing lower bounds do not resolve this question. Our main result in \Cref{subsec:hardness:result} then strengthens the lower-bound argument of \citet{alman2023fast} to rule out any nontrivial additive or relative approximation guarantee in truly subquadratic time.

\subsection{Problem formulation} \label{subsec:hardness:problem-formulation}
We consider the standard softmax attention operation over a sequence of input vectors. Attention allows each vector to selectively aggregate information from all other vectors in the sequence. To determine what information to aggregate, each element exposes a \emph{query}, a \emph{key}, and a \emph{value}. Each query is compared with all keys, and the resulting scores determine how the corresponding values are combined. Formally, this operation is defined as follows.
\begin{definition}[Attention] \label{def:attention}
    For matrices \(\mQ \in \R^{m \times d}\) and \(\mK,\mV\in\sR^{n\times d}\), let \(\mA = \exp(\mQ\mK^\top/d)\) and \(\mD = \diag(\mA\vone_n)\), where \(\exp\) is applied entrywise.
    Define \emph{attention} as
    \[
        \Att(\mQ,\mK,\mV)
        \coloneqq
        \mD^{-1}\mA\mV
        \in \R^{m \times d}.
    \]
\end{definition}
\begin{remark}
    For better comparability of our results with prior work, we follow \citet[Def.~1.1]{alman2023fast} in normalizing \(\mQ\mK^\top\) by \(d\).
    In practice, normalization often uses \(\sqrt{d}\) or a tunable parameter, instead.
    The precise normalization is inconsequential for our results, however, because it can always be absorbed into the keys and queries by suitable scaling.
\end{remark}

The straightforward implementation of attention computes the full attention matrix \(\mA\), comparing each of the \(m\) queries to each of the \(n\) keys before aggregating the values. In the usual setting where \(m=n\), this results in \(\bigO(n^2)\) operations. At the same time, both the input and output only have \(\bigO(nd)\) entries and the embedding dimension \(d\) is often much smaller than the sequence length \(n\). Consequently, one may hope for an algorithm that avoids explicitly forming all \(n^2\) comparisons and computes \(\Att(\mQ,\mK,\mV)\), at least approximately, in much less than \(\bigO(n^2)\) time. We formalize this problem below for different uniform approximation guarantees such an algorithm could give.

\begin{boxedproblem}{prb:approximate-attention}{Approximating attention}
    Fix parameters \(n,d \in \sN\) and \(B \ge 0\).
    Given matrices \(\mQ,\mK \in [-B,B]^{n \times d}\) and \(\mV \in [0,1]^{n \times d}\), the task is to compute a matrix \(\mT \in \sR^{n \times d}\).
    For this output, the problem
    \begin{align*}
        \AbsAtt(n,d,B,\eta_\mathrm{abs})
        &\qquad\text{requires}\qquad
        \|\mT-\Att(\mQ,\mK,\mV)\|_{\ell_\infty}
        \le \eta_\mathrm{abs},\quad\text{and}\\
        \RelAtt_p(n,d,B,\eta_\mathrm{rel})
        &\qquad\text{requires}\qquad
        \|\mT-\Att(\mQ,\mK,\mV)\|_{\Schatten_p} \le \eta_\mathrm{rel}\|\Att(\mQ,\mK,\mV)\|_{\Schatten_p}.
    \end{align*}
\end{boxedproblem}
We allow an algorithm to approximate attention with either an additive or a relative error guarantee. For additive error, we follow prior work in measuring error entrywise in \(\ell_\infty\) norm and restrict the values to \([0,1]\), since otherwise the error could be inflated arbitrarily by scaling the values. For relative error, we instead allow any Schatten norm \(\Schatten_p\), covering a broad range of natural relative and spectral guarantees not addressed by previous lower bounds. Under these conventions, additive error \(1/2\) and relative error \(1\) are trivial to achieve with no dependence on the input. A useful algorithm should therefore guarantee additive error below \(1/2\) or relative error below \(1\).

Our goal is to show that an algorithm with such nontrivial approximation guarantees cannot run in truly less than \(\bigO(n^2)\) time, under the mildest possible assumptions on \(d\) and \(B\). Concretely, we target \(d=\bigTheta(\log n)\) and \(B=\bigTheta(\sqrt{\log n})\), since making either parameter asymptotically smaller yields known algorithms with near-linear runtime and with vanishingly small error \citep{alman2023fast,schroder2026wildcat}. To rule out efficient algorithms at this boundary, we work under the strong exponential time hypothesis (\(\seth\)), which is a standard assumption in fine-grained complexity theory. In particular, essentially all prior lower bounds for attention are conditional on \(\seth\) \citep{alman2023fast,duman2023computational,gupta2026subquadratic}. 
\begin{hypothesis}[\seth, \citet{impagliazzo2001complexity}] \label{hyp:seth}
    For every \(\varepsilon>0\), there is an integer \(k\ge3\) such that \(k\)-\(\sat\) on formulas with \(n\) variables cannot be solved in \(\bigO(2^{(1-\varepsilon)n})\) time, not even by a randomized algorithm with bounded error probability.
\end{hypothesis}

\subsection{Existing lower bounds only rule out high-precision approximation} \label{subsec:hardness:prior-work}
Attention can already be efficiently approximated to extremely high accuracy when the data dimension and norms are sufficiently small.
Specifically, when either \(d=\bigO(\log n)\) and \(B=\littleO(\sqrt{\log n})\), or \(d=\littleO(\log n)\) and \(B=\bigO(\sqrt{\log n})\), there are algorithms with near-linear runtime solving \(\AbsAtt(n,d,B,\eta_\mathrm{abs})\) for any constant error guarantee \(\eta_\mathrm{abs}>0\).
Concretely, for any constant \(c>0\), \citet{alman2023fast} achieve error vanishing as \(n^{-c}\) with the sequence length in \(n^{1+\littleO(1)}\) time, and \citet{schroder2026wildcat} even achieve error vanishing as \(n^{-\littleOmega(1)}\) in \(n^{1+\littleO(1)}\) time.
Although asymptotic, these guarantees amount effectively to exact attention computation at the sequence lengths for which approximate attention is practically relevant.
Consequently, the mildest regime for which we do not already know how to solve \Cref{prb:approximate-attention} efficiently is at the boundary where \(d=\bigTheta(\log n)\) and \(B=\bigTheta(\sqrt{\log n})\).

At this boundary, existing runtime lower bounds show that efficient algorithms can no longer guarantee extreme accuracy.
Concretely, \citet{alman2023fast} show that \emph{some} polynomial error decay cannot be attained in truly subquadratic time unless \(\seth\) fails.
\begin{theorem}[\citet[Thm.~4.6]{alman2023fast}] \label{thm:alman-song-lower-bound}
    Fix \(q > 0\).
    Assuming \(\seth\), there exist constants \(D,B,C > 0\) such that \(\AbsAtt(n, D \log n, B \sqrt{\log n}, n^{-C})\) cannot be solved in \(\bigO(n^{2-q})\) time.
\end{theorem}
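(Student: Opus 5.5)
We reduce from the approximate bichromatic closest pair problem (\bcp) on Boolean vectors, whose hardness under \seth{} is due to \citet{rubinstein2018hardness}: for every \(q>0\) there are constants \(c_0>0\) and \(\epsilon>0\) such that, given sets \(\{a_1,\dots,a_n\},\{b_1,\dots,b_n\}\subseteq\{0,1\}^{c_0\log n}\) and a threshold \(t\), deciding whether some pair has Hamming distance \(\le t\) or all pairs have distance \(\ge (1+\epsilon)t\) cannot be done in \(\bigO(n^{2-q})\) time. The plan is to show that a fast \(\AbsAtt\) solver with error \(n^{-C}\) yields a fast algorithm for this gap problem, for suitable constants \(D,B,C\).

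\textbf{Step 1 (encoding).} We take \(d=D\log n\) with \(D=c_0\). Keys are rows \(\mK_j=\beta b_j\) and queries \(\mQ_i=\beta a_i\), possibly after a standard \(\pm1\) re-encoding so that the inner product is an affine function of Hamming distance with a fixed norm term. Then \(\exp(\langle \mQ_i,\mK_j\rangle/d)=\exp(\beta^2(\text{const}-2\,\mathrm{dist}(a_i,b_j))/d)\). We choose \(\beta=\Theta(\sqrt{\log n})\) so that entries lie in \([-B,B]\) with \(B=\bigTheta(\sqrt{\log n})\). The constant is chosen so that the exponent gap \(\beta^2\cdot 2\epsilon t/d\) between close and far pairs is at least \(\kappa\log n\) with \(\kappa>1\), i.e., a factor exceeding \(n\). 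This is where the constant in \(B\) is fixed, in terms of \(\epsilon\) and \(t/d\).

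\textbf{Step 2 (extracting the sum).} We use attention to recover the row sums \(s_i=\sum_j \mA_{ij}\), or a proxy for them. The standard trick from \citet{alman2023fast} is to append one extra key \(k_0\) whose inner product with every query equals a fixed threshold level \(\tau\), placed between the close and far levels, and to give it value \(1\) while all other values are \(0\). The output at row \(i\) is then \(e^{\tau}/(e^{\tau}+s_i)\). If some \(b_j\) is close to \(a_i\), then \(s_i\ge e^{\tau}n^{\kappa'}\) and the output is at most about \(n^{-\kappa'}\). If all pairs are far, then \(s_i\le n\cdot e^{\tau}n^{-\kappa''}\) and the output is near \(1-n^{1-\kappa''}\). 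Realizing \(k_0\) requires a coordinate on which all queries agree, which we get by appending a constant coordinate to every query; this stays inside the norm bound. Because these two output values differ by far more than \(2n^{-C}\), any \(\mT\) with error at most \(n^{-C}\) distinguishes them for every \(i\). We then answer the gap problem by scanning the rows of \(\mT\), which takes \(\bigO(nd)\) extra time. Hence an \(\bigO(n^{2-q})\) solver for \(\AbsAtt\) contradicts the hardness of \bcp{}.

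\textbf{Main obstacle.} The hard step is balancing the constants. Crossing a multiplicative gap of \(n\), the number of far terms, requires \(\beta^2\epsilon t/d=\Omega(\log n)\). With \(t/d\) fixed this is exactly \(B=\Theta(\sqrt{\log n})\), but \(B\) depends on \(1/\epsilon\), and \(\epsilon\) is only a tiny constant in Rubinstein's theorem. This is also why the proof only yields some existential \(C\). If the raw gap is insufficient, the first thing I would try is Alman--Song's polynomial-method alternative: drop the separating key, use a fast \(\AbsAtt\) solver with polynomially small error to estimate \(\sum_{i,j}\mA_{ij}\) across a few scaled instances, and then compare against the contribution of the far pairs. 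Under that approach the required precision \(n^{-C}\) grows with \(1/\epsilon\).
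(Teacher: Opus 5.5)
Your idea of placing a separating key between the close and far logit levels is the right one. It is essentially the paper's \Cref{lem:gadget}, which yields the stronger \Cref{thm:constant-error-hardness}. The paper only cites the stated result, which follows from that theorem since \(n^{-C}<1/2\). As written, however, your reduction has a genuine gap at small thresholds.

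\textbf{Small thresholds.} Your constants work only ``with \(t/d\) fixed''. The close/far logit gap is about \(2\beta^2\epsilon t/d\) and must exceed \(\log n\), so you need \(\beta^2\gtrsim d\log n/(\epsilon t)\). The gap version of Rubinstein's theorem you invoke puts no lower bound on \(t\). For \(t=\littleO(\log n)\), e.g.\ \(t=\bigO(1)\), this forces entries of size \(\Omega(\log n)\), so no constant \(B\) works for all instances. The missing step is to dispose of small thresholds directly. Enumerate the Hamming ball of radius \(t\) around each \(a_i\) and look the candidates up in a trie or hash set of the \(b_j\). When \(t\le c\log n\), this decides the instance in \(\bigO\bigl(n\,d\,(ec_0/c)^{c\log n}\bigr)=n^{1+c\log(ec_0/c)+\littleO(1)}\) time, which is \(\bigO(n^{2-q})\) for \(c\) small enough. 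This is \Cref{lem:small-t-bcp}, following Alman--Song's Lem.~B.1. Hence hard instances have \(t>c\log n\), and only then can \(B\) be fixed. The order of constants matters: \(c_0,\epsilon\) come from Rubinstein, then \(c\) from \(c_0,q\), and finally \(B\) from \(c,c_0,\epsilon\). You should also discard \((1+\epsilon)t>d\), where the far case is impossible.

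\textbf{Realizing the separating key.} It cannot be done with ``a constant coordinate''. One coordinate changes a logit by at most \((B\sqrt{\log n})^2/(D\log n)=B^2/D=\bigO(1)\). Your threshold \(\tau\approx\beta^2\bigl(d-(2+\epsilon)t\bigr)/d'\) has order \(\log n\) unless \(t\) happens to lie near \(d/(2+\epsilon)\). You need a block of \(\Theta(d)\) coordinates on which every query equals \(\beta\) and \(k_0\) carries some \(\gamma\) with \(|\gamma|\le\beta\). Alternatively, as in \Cref{lem:gadget}, give the genuine keys \(-\mu\beta\) on that block, so the threshold sits at logit \(0\) and all-zero dummy keys suffice. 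Either fix only enlarges \(D\) by a constant factor. Likewise, the \(\pm1\) re-encoding (or the paper's balanced-weight slice) is required, not optional: with raw \(0/1\) vectors the logit depends on \(\|b_j\|_0\).

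\textbf{Your ``main obstacle'' discussion.} It misreads the situation. Once repaired, the construction makes every row nearly \(0\) or nearly \(1\), so it works for every \(C>0\) and indeed for any constant error below \(1/2\). The dependence of \(B\) on \(1/\epsilon\) is harmless because \(\epsilon\) is a constant. No polynomial-method fallback is needed; that is Alman--Song's upper-bound tool, not part of their lower bound.
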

While this result rules out the extreme accuracy possible for small \(d\) and \(B\), it leaves open whether some of this accuracy can be exchanged for fast attention approximation in broader regimes.
In particular, \Cref{thm:alman-song-lower-bound} does not preclude a linear-time algorithm for \(\AbsAtt(n,\log n,\sqrt{\log n},0.49)\).

As summarized in \Cref{tab:lower-bounds}, other lower bounds do not resolve this question either. Closing the gap would require ruling out coarse approximation when \(d=\bigTheta(\log n)\) and \(B=\bigTheta(\sqrt{\log n})\). \citet{duman2023computational} rule out nontrivial entrywise-relative approximation, but only for super-polynomially small additive error and under more restrictive assumptions on \(d\) and \(B\). Likewise, \citet[Thm.~5.7]{gupta2026subquadratic} applies to \(d=\bigTheta(\log n)\) and \(B=\bigTheta(\sqrt{\log n})\), but again only shows that some polynomial decay is impossible, while \citet[Thm.~5.4]{gupta2026subquadratic} rules out constant-error approximation only in the practically unrealistic setting where query and key entries grow polynomially with \(n\).

Consequently, existing lower bounds leave open whether truly subquadratic algorithms can provide uniform approximation guarantees beyond the parameter regime of known algorithms with near-linear runtime and near-exact approximation. Closing this gap requires ruling out the entire nontrivial range of additive and relative errors already at \(d=\bigTheta(\log n)\) and \(B=\bigTheta(\sqrt{\log n})\).

\newcolumntype{Y}{>{\raggedright\arraybackslash}X}

\begin{table}[t]
    \centering
    \caption{Comparison of hardness results for approximating attention.
All results establish \(n^{2-\littleO(1)}\) runtime lower bounds under \(\seth\), but apply to different settings (head dimensions \(d\) and entry bounds \(B\)) and rule out different error guarantees.
For comparability, all parameters are standardized to \(\exp(\mQ\mK^\top/d)\) normalization.
``Error scale'' reports the error threshold below which hardness is established.
For the exact attention output \(\mY\in\R^{n\times d}\) and an approximation \(\mT\in\R^{n\times d}\), additive error means \(\|\mT-\mY\|_{\ell_\infty}\); entrywise-relative error means the smallest \(\eta\) such that \(|\mT_{ij}-\mY_{ij}| \le \eta |\mY_{ij}|\) for all \(i,j\); and norm-relative error means \(\|\mT-\mY\|_{\Schatten_p}/\|\mY\|_{\Schatten_p}\) for any Schatten norm \(\Schatten_p\).
Additive errors are comparable because all constructions use value entries in \([0,1]\).
All previous results either apply only under extreme temperature or rule out only extremely high-precision approximation.
We establish hardness in the mildest setting and for the coarsest error scales, ruling out \emph{every nontrivial approximation guarantee} under all three error notions.}
    \label{tab:lower-bounds}

    \renewcommand{\arraystretch}{1.2}
    \small
    \begin{tabular*}{\linewidth}{
        @{\extracolsep{\fill}}
        l c c c c c
        @{}
    }
        \toprule
        & \multicolumn{2}{c}{\textbf{Setting}}
        & \multicolumn{3}{c}{\textbf{Error scale}} \\
        \cmidrule(l){2-3}
        \cmidrule(l){4-6}
        \textbf{Result}
        & \footnotesize{\(d\)}
        & \footnotesize{\(B\)}
        & \footnotesize{Additive}
        & \footnotesize{Entrywise relative}
        & \footnotesize{Norm relative} \\
        \midrule

        \cite{gupta2026subquadratic} Thm.~5.4
        & \(2^{\bigTheta(\log^* n)}\)
        & \(n^{\bigTheta(1)}\)
        & \(\bigTheta(1)\)
        & \(\bigTheta(1)\) 
        & -- \\

        \cite{gupta2026subquadratic} Thm.~5.7
        & \(\Theta(\log n)\)
        & \(\Theta(\sqrt{\log n})\)
        & \(n^{-\bigTheta(1)}\)
        & \(n^{-\bigTheta(1)}\)  
        & -- \\

        \cite{duman2023computational}
        & \(\littleOmega(\log n)\)
        & \(\Theta(d^{3/2})\)
        & \(n^{-\bigOmega(d)}\)
        & \(1\)  
        & -- \\

        \cite{alman2023fast}
        & \(\bigTheta(\log n)\)
        & \(\bigTheta(\sqrt{\log n})\)
        & \(n^{-\Theta(1)}\)
        & \(n^{-\Theta(1)}\)  
        & -- \\

        \Cref{thm:constant-error-hardness} (ours)
        & \(\bigTheta(\log n)\)
        & \(\bigTheta(\sqrt{\log n})\)
        & \(\frac{1}{2}\)
        & \(1\)  
        & \(1\) \\

        \bottomrule
    \end{tabular*}
\end{table}

\subsection{Nontrivial guarantees are impossible under the mildest assumptions} \label{subsec:hardness:result}
We show that, once \(d=\bigTheta(\log n)\) and \(B=\bigTheta(\sqrt{\log n})\), no truly subquadratic algorithm can solve either \(\AbsAtt(n,d,B,\eta_\mathrm{abs})\) for any \(\eta_\mathrm{abs}<1/2\) or \(\RelAtt_p(n,d,B,\eta_\mathrm{rel})\) for any \(\eta_\mathrm{rel}<1\), unless \(\seth\) fails. To that end, we strengthen the proof of \citet{alman2023fast}, which draws on the connection between computing attention and finding close vectors.
\begin{definition}[store=def:bcp, name=Closest pair problem]
    For \(n,d\in\sN\) and \(\varepsilon>0\), the problem \(\bcp(n,d,\varepsilon)\) is the following.
    Given sets \(\{\va_1,\ldots,\va_n\}, \{\vb_1,\ldots,\vb_n\} \subseteq \{0,1\}^d\), find \(i^\star,j^\star \in [n]\) such that
    \[
        \|\va_{i^\star} - \vb_{j^\star}\|_0
        \le
        (1+\varepsilon) \min_{i,j\in[n]} \|\va_i - \vb_j\|_0.
    \]
\end{definition}
The central idea underlying the proof of \citet{alman2023fast} is that the attention matrix can be repurposed to encode pairwise distances between the two sets of vectors, while the subsequent value aggregation is used to recover a close pair. Hence, a fast algorithm for computing attention yields an essentially equally fast algorithm for finding an approximate closest pair. Finding such a pair, however among two sets of \(n\) binary vectors in dimension \(d=\bigO(\log n)\) cannot be done much more efficiently than checking all \(n^2\) possible pairs unless \(\seth\) fails, as \citet{rubinstein2018hardness} proves.
\begin{proposition}[store=pro:bcp-hardness, name=\citet[Thm.~1.1]{rubinstein2018hardness}]
    Fix \(q>0\).
    Assuming \(\seth\), there exist \(C>0\) and \(\varepsilon\in(0,1)\) such that \(\bcp(n,C\log n,\varepsilon)\) cannot be solved in \(\bigO(n^{2-q})\) time.
\end{proposition}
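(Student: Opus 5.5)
This statement is imported from \citet{rubinstein2018hardness}, so the plan is to reconstruct the chain of reductions behind it rather than to reprove anything from scratch. The chain is $\seth \Rightarrow$ Orthogonal Vectors (OV) $\Rightarrow$ a gap version of maximum inner product / Hamming closest pair $\Rightarrow \bcp(n, C\log n, \varepsilon)$, with the gap created by a distributed PCP built from a Merlin--Arthur communication protocol for set disjointness.

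\textbf{Step 1: from $\seth$ to OV.} I would first invoke Williams' reduction. For every $\delta > 0$, assuming $\seth$, there is $c = c(\delta)$ such that OV on $n$ vectors in $\{0,1\}^{c\log n}$ admits no $\bigO(n^{2-\delta})$ algorithm. The reduction splits the $k$-$\sat$ variables into two halves and enumerates all $2^{N/2}$ assignments of each half. After sparsification, each clause becomes one coordinate, so an orthogonal pair corresponds exactly to a satisfying assignment.

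\textbf{Step 2: gap amplification via a distributed PCP.} Given an OV instance $\{\va_i\}, \{\vb_j\} \subseteq \{0,1\}^m$ with $m = c \log n$, I view each pair as an instance of set disjointness between Alice, who holds $\va_i$, and Bob, who holds $\vb_j$. I then apply the Aaronson--Wigderson MA protocol. It encodes the inputs as low-degree polynomials over a field extension and splits the coordinates into blocks. Merlin sends a proof of length about $m/T$ field elements, and Arthur checks a random evaluation point. The protocol has perfect completeness and soundness error bounded away from $1$ by a constant.

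I enumerate over all Merlin proofs, of which there are $2^{\littleO(\log n)}$ after tuning $T$, and over all random strings. For each pair (proof, randomness), Alice's and Bob's messages become coordinates of new vectors. The agreement count between $\va'_i$ and $\vb'_j$ then equals the acceptance probability maximized over proofs. Orthogonal pairs therefore reach a full agreement threshold, while non-orthogonal pairs stay a constant factor below it.

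To keep the dimension at $\bigO(\log n)$ and the alphabet constant, Rubinstein replaces Reed--Solomon-style encodings with algebraic-geometry codes that have constant alphabet and constant rate. The new instance has $n \cdot 2^{\littleO(\log n)}$ vectors over a constant-size alphabet, in dimension $C' \log n$.

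\textbf{Step 3: translating to Hamming distance.} I would map each symbol over the constant alphabet $\Sigma$ to a binary gadget such that equal symbols have Hamming distance $0$ and unequal symbols have a fixed distance. One-hot encoding works: it gives distance $2$ for unequal symbols and multiplies the dimension by $|\Sigma|$. Agreement gaps then become a multiplicative gap $(1+\varepsilon)$ in $\|\cdot\|_0$ distance between yes- and no-instances, with $d = C\log n$ for a constant $C$.

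Since the Merlin enumeration adds only a $2^{\littleO(\log n)}$ overhead, an $\bigO(n^{2-q})$ algorithm for $\bcp$ would solve OV in $\bigO(n^{2-q+\littleO(1)})$ time. Choosing $\delta < q$ in Step 1 gives a contradiction with $\seth$. This also fixes $C$ and $\varepsilon$ as functions of $q$ alone.

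\textbf{Main obstacle.} The delicate step is Step 2: simultaneously keeping the Merlin proof sub-logarithmic, the alphabet constant, and the soundness gap constant. This balancing is exactly where the AG codes are needed. For this paper I would simply cite \citet[Thm.~1.1]{rubinstein2018hardness} for that step.
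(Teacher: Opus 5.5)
The paper does not reprove this statement. It imports \citet[Thm.~1.1]{rubinstein2018hardness} as is, adding only the remark that the reduction chain is deterministic. Ending with a citation is therefore all that is needed, and your overall architecture (\seth{} $\to$ OV $\to$ AG-code distributed PCP $\to$ gap closest pair) is the right one. However, the reconstruction you give around the citation contains two concrete errors that would break the argument if carried out as written.

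\textbf{The parameter regime in Step~2 cannot exist.} Klauck's MA lower bound for set disjointness says that, at constant soundness, Merlin's proof length times Alice's message length is $\Omega(N)$, with $N=c\log n$ here. So if there are only $2^{\littleO(\log n)}$ proofs, Alice's message has $c_A=\littleOmega(1)$ bits. In this framework each random string then needs a block of $2^{c_A}$ coordinates, which breaks both the $\bigO(\log n)$ dimension and the constant gap (see below, $\varepsilon=\Theta(2^{-c_A})$). In the AG-code protocol specifically, letting $T\to\infty$ to shrink the proof makes Alice's $T$ symbols unbounded. Rubinstein instead keeps his parameter constant. Alice then sends $\bigO(1)$ bits, there are $\bigO(N)$ random strings, and Merlin's proof is $\alpha N$ bits for a small constant $\alpha>0$. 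Enumerating proofs (one separate $\bcp$ instance per proof, not proofs as coordinates) therefore costs a genuine polynomial factor $2^{\alpha N}=n^{\alpha c}$, not $n^{\littleO(1)}$. The constants must be fixed in order:
\begin{itemize}
\item fix $\delta<q$;
\item take $c=c(\delta)$ from the OV reduction;
\item only then pick $\alpha<(q-\delta)/c$, so that $n^{2-q}\cdot n^{\alpha c}=\bigO(n^{2-\delta})$ contradicts OV hardness.
\end{itemize}
$C$ and $\varepsilon$ then depend on $\alpha$ and $c$, which is why $\varepsilon$ must shrink as $q\to 0$.

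\textbf{The encoding in Steps~2--3 is wrong.} In the one-way protocol Bob sends nothing. For each random string he has a \emph{set} of Alice-messages he accepts (in the AG protocol, an affine hyperplane), not a single symbol. Worse, any literal ``agreement'' encoding with perfect completeness maps orthogonal pairs to identical vectors, i.e.\ Hamming distance $0$. Hashing finds such pairs in near-linear time, so that reduction would refute \seth{} on its own, which shows it cannot be right.

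The working encoding, for each fixed proof, is a gap Max-IP instance:
\begin{itemize}
\item $\va'$ one-hot encodes Alice's message for every random string, so it has weight exactly $R$, the number of random strings;
\item $\vb'$ is the indicator of Bob's accepting sets, so $\langle\va',\vb'\rangle$ counts accepting random strings.
\end{itemize}
Pad $\vb'$ to the common weight $W=R\,2^{c_A}$ on fresh coordinates where $\va'$ is zero. This gives $\|\va'-\vb'\|_0=R+W-2\langle\va',\vb'\rangle$. Completeness then yields distance $W-R$, and soundness $s$ yields at least $W-R+2(1-s)R$. That is a multiplicative gap $1+\varepsilon$ with $\varepsilon=2(1-s)/(2^{c_A}-1)$, which is constant only because $c_A=\bigO(1)$.
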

A truly subquadratic time algorithm for attention therefore cannot exist under \(\seth\). When only a fast \emph{approximate} algorithm for attention is available, however, the particular construction of \citet{alman2023fast} recovers such a pair only when the approximation error is extremely small. We therefore strengthen their construction to find a close pair reliably even when only a very coarse approximation of attention is available. Put together, any truly subquadratic algorithm for attention with a nontrivial approximation guarantee, already for \(d=\bigTheta(\log n)\) and \(B=\bigTheta(\sqrt{\log n})\), then implies a truly subquadratic algorithm for the closest pair problem, which would contradict \(\seth\) via \Cref{pro:bcp-hardness}. The following theorem formalizes this result. We sketch the main idea here and give the full proof in \Cref{apx:proofs}.
\begin{theorem}[store=thm:constant-error-hardness]
    Fix \(\eta_\mathrm{abs} \in [0, \frac{1}{2})\), \(\eta_\mathrm{rel} \in [0, 1)\), and \(q>0\).
    Assuming \(\seth\), there exist \(C_d,C_B > 0\) such that neither \(\AbsAtt(n,D,B,\eta_\mathrm{abs})\) nor \(\RelAtt_p(n,D,B,\eta_\mathrm{rel})\) can be solved in \(\bigO(n^{2-q})\) time, for \(D \coloneqq C_d \log n\), \(B \coloneqq C_B \sqrt{\log n}\), and any \(p \in [1,\infty]\).
\end{theorem}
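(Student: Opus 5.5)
We reduce from the closest pair problem of \Cref{pro:bcp-hardness}. The reduction turns any truly subquadratic algorithm for $\AbsAtt$ or $\RelAtt_p$ with a nontrivial guarantee into a truly subquadratic algorithm for $\bcp(n,C\log n,\varepsilon)$. It first reduces $\bcp$ to a \emph{gap decision} question: for a threshold $t$, decide whether some pair has $\|\va_i-\vb_j\|_0\le t$ or all pairs have distance $\ge(1+\varepsilon)t$. This costs only $\bigO(\log d)$ guesses of $t$ plus a self-reduction that locates the pair.

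\textbf{Encoding and gap amplification.} Map $\va\mapsto \sqrt{\beta}(2\va-\vone)$ and $\vb\mapsto\sqrt{\beta}(2\vb-\vone)$. Then the inner product equals $\beta(d-2\|\va-\vb\|_0)$, and scaling absorbs the $1/D$ normalization. A close pair then has attention weight larger than any far pair by a factor $\exp(\Theta(\beta\varepsilon t))$. We may assume $t=\Theta(\log n)$, by padding as in \citet{rubinstein2018hardness}, so this factor is $n^{\Theta(\beta\varepsilon)}$. Choosing $\beta$ a large constant makes the factor exceed $n^{2}/\delta$ for any desired $\delta>0$, while entries stay $\bigO(\sqrt{\log n})$ and $D=\bigTheta(\log n)$. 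This is the key strengthening over \citet{alman2023fast}: their constants tie $\beta$ to a fixed polynomial error, whereas we let the amplification depend on the target $\eta$.

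\textbf{Threshold key.} We add one extra coordinate and one dummy key $\vb_0$ with value row $\vone$, while all real keys get value row $\vzero$. The extra coordinate is tuned so that the dummy's score with every query is $\exp(\beta(d-2t^\ast))$ for some $t^\ast$ strictly between $t$ and $(1+\varepsilon)t$, splitting the gap in the exponent.
\begin{itemize}
\item If query $i$ has a close partner, that key outweighs the dummy by a factor $n^{c}$. Hence $\Att_{i,\cdot}\le\delta$.
\item If query $i$ has no close partner, the dummy outweighs the total mass of all $n$ far keys by a factor $n^{c}$. Hence $\Att_{i,\cdot}\ge 1-\delta$.
\end{itemize}
With $\delta<\tfrac12-\eta_\mathrm{abs}$, any additive approximation thresholded at $1/2$ identifies every query with a close partner. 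We then brute-force that query against all keys in $\bigO(nd)$ time. The total running time is $\bigO(n^{2-q})$, which contradicts \Cref{pro:bcp-hardness}.

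\textbf{Relative error (main obstacle).} A Schatten-norm relative bound is global, so it does not localize to rows. I would fix this by making the two cases differ as \emph{whole matrices}, not just in a single row.
\begin{itemize}
\item Flip the roles: real keys get value $\vone$ and the dummy gets value $\vzero$. In a \textsc{no} instance every row of $\Att$ is then at most $\delta$ in every entry, so $\|\Att\|_{\Schatten_p}\le\delta\cdot\|\vone\vone^\top\|_{\Schatten_p}$.
\item Pad the query set with $n$ copies of the query having a close partner. In a \textsc{yes} instance a constant fraction of rows are then $\approx\vone$.
\end{itemize}
Since the padding requires knowing the close query, I instead proceed recursively.
\begin{itemize}
\item Test subsets of queries in a halving scheme. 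In each call we replicate the current subset of queries up to size $n$, so rows of a \textsc{yes} instance carry mass comparable to $\|\vone\vone^\top\|$.
\item Compute $\|\mT\|_{\Schatten_p}$ cheaply by observing that $\mT$ can be projected onto the rank-one direction $\vone$, using the Schatten bound only through $\|\cdot\|_{\Schatten_p}\ge\|\cdot\|_{\mathrm{op}}$.
\item Conclude that $\eta_\mathrm{rel}<1$ separates $\|\mT\|$ between the two cases.
\end{itemize}
The halving adds only a $\log n$ factor. Making this norm comparison uniform over all $p\in[1,\infty]$, with $\delta$ small enough relative to $1-\eta_\mathrm{rel}$, is the step I expect to require the most care.
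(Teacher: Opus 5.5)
Your additive-error reduction is sound in outline. The \(\pm1\) encoding \(\langle 2\va-\vone,2\vb-\vone\rangle=d-2\|\va-\vb\|_0\) is a nice simplification, since it removes the paper's need for balanced vectors. The relative-error part, however, has a genuine gap. Because you keep the leakage \(\delta\) constant, the halving scheme cannot work. Replicating a subset of \(s\) queries that contains one good query up to \(n\) rows gives a \textsc{yes}-norm of only about \((1-\delta)\sqrt{(n/s)D}\), while your \textsc{no} bound is \(\delta\sqrt{nD}\). A fixed \(\eta_\mathrm{rel}<1\) therefore separates the two only when \(s=\bigO(1/\delta^2)\). The top levels of the recursion are undecidable, and testing constant-size subsets instead costs \(\bigTheta(n)\) attention calls of \(n^{2-q}\) time each.

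The missing observation, which is exactly what the paper uses, is that your own amplification makes \(\delta\) polynomially small for free. Once \(t\ge c\log n\), the logit gap is \(\bigTheta(\beta\varepsilon\log n)\), so a larger \emph{constant} \(\beta\) already gives every row leakage at most \(1/n\). Take a common value row \(\vv^\top\) on the genuine keys (\(\ve_1^\top\) in the paper, \(\vone^\top\) in your flip) and \(\vzero\) on the dummies. Then \(\mY=\vy\vv^\top\) is rank one, so \(\|\mY\|_{\Schatten_p}=\|\vy\|_2\|\vv\|_2\) for every \(p\in[1,\infty]\). A \textsc{no} instance has \(\|\vy\|_2\le\sqrt{m}/n=\littleO(1)\), whereas a single close query in a \textsc{yes} instance gives \(\|\vy\|_2\ge 1-1/n\). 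Thresholding \(\|\mT\|_{\Schatten_p}\) at \((1-\eta_\mathrm{rel})(1-1/n)\|\vv\|_2\) therefore decides the instance directly, with no padding, recursion, or operator-norm projection.

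Two further steps need repair. First, ``we may assume \(t=\bigTheta(\log n)\) by padding'' does not hold up. Appending \(k\) coordinates on which every pair differs shrinks the relative gap to \(\varepsilon t/(t+k)\), and repeating coordinates leaves \(t/d\) unchanged, so small thresholds cannot be padded away. They must nonetheless be excluded: for \(t=\littleO(\log n)\), a logit gap of order \(\log n\) requires \(\beta\) of order \(\log n/t\), and hence entries well beyond \(\bigO(\sqrt{\log n})\). The paper disposes of \(t\le c\log n\) separately in \Cref{lem:small-t-bcp}. It enumerates the radius-\(t\) Hamming ball around each query, which takes \(\bigO(n^{1-q})\) time per query for \(c\) small enough.

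Second, a single extra coordinate cannot give the dummy key the logit \(\beta(d-2t^\ast)\). After the \(1/D\) normalization this logit is typically of order \(\log n\), so the two extra entries would need a product of order \(\log^2 n\), exceeding \(B^2=\bigO(\log n)\). The shift must be spread over \(\bigTheta(\log n)\) coordinates. \Cref{lem:gadget} does this by pairing a block \(-\mu\vone_d\) in the genuine keys with \(\vone_d\) in the queries, and comparing against zero dummy keys.
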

\begin{proof}[Proof sketch.]
    Given two sets of binary vectors as an input to the closest pair problem, we construct from them query and key matrices so that the full \(n \times n\) attention matrix of all key--query comparisons already contains the information needed to find close pairs, with the largest attention scores corresponding to closest vectors. The difficulty is that an attention algorithm does not return these scores but only their normalized aggregation against the values, which can obscure the contribution of any individual close pair. Our main idea is to turn this normalization itself into a test whether a close pair with distance below a given threshold exists. To achieve this, we add dummy keys and augment the genuine queries and keys so that close pairs score above the dummies, while far pairs score below them. We then choose the values to measure the total attention paid to genuine keys. By scaling the logits appropriately, a single close genuine key can be made to outweigh all dummy keys whereas the combined contribution of all far genuine keys remains negligible, all while keeping the query and key entries bounded by \(B=\bigO(\sqrt{\log n})\). The resulting attention output then becomes an almost-binary indicator of whether a pair below distance \(t\) exists, allowing even any fixed nontrivial additive or relative approximation to distinguish the two cases. Repeating this test over logarithmically many thresholds recovers an approximate closest pair. Thus, a truly subquadratic algorithm for approximating attention would yield a truly subquadratic algorithm for the closest pair problem, contradicting \Cref{pro:bcp-hardness} unless \(\seth\) fails.
\end{proof}
\Cref{thm:constant-error-hardness} therefore settles the boundary of efficient attention approximation with uniform guarantees. At \(d=\bigTheta(\log n)\) and \(B=\bigTheta(\sqrt{\log n})\), even the weakest nontrivial guarantees require essentially quadratic work under standard complexity-theoretic assumptions. At the same time, relaxing either parameter to \(d=\littleO(\log n)\) or \(B=\littleO(\sqrt{\log n})\) admits algorithms with near-linear runtime and near-exact approximation \citep{alman2023fast,schroder2026wildcat}. Our results therefore identify a sharp transition of efficient attention approximation from possible at high accuracy to impossible at any nontrivial accuracy.
\newpage
\section{Approximation remains hard after KV cache preprocessing} \label{sec:preprocessing}
Transformer inference in practice is typically optimized for \emph{autoregressive generation}, where the model extends the original input sequence one token at a time. This differs from \emph{prefill}, where the initial input is processed all at once. Because autoregressive generation is inherently sequential, it is much harder to parallelize than prefill, making attention during generation often much more costly than during prefill. This has motivated substantial work on accelerating attention specifically during generation through preprocessing of the keys and values produced during prefill, collectively known as the \emph{KV cache}. In particular, such methods may spend additional computation during or after prefill to compress, index, or otherwise organize the KV cache to reduce the cost of subsequent attention queries \citep{li2024snapkv,tang2024quest,chen2025magicpig,liu2025retrievalattention}.

Allowing algorithms to preprocess the KV cache before approximating attention could, in principle, make uniform approximation substantially easier. Yet existing runtime lower bounds do not address this tradeoff between preprocessing and subsequent query time, despite its practical relevance. We therefore ask whether KV cache preprocessing enables fast attention approximation with nontrivial uniform guarantees that are impossible without it. To answer this question, \Cref{subsec:preprocessing:problem-formulation} adapts the computational problem of approximate attention to autoregressive generation after KV cache preprocessing. \Cref{subsec:preprocessing:result} then shows that, even with arbitrary polynomial-time preprocessing and under the same mild assumptions as before, nontrivial uniform guarantees remain impossible in every regime not already covered by fast high-accuracy approximation algorithms.

\subsection{Problem formulation} \label{subsec:preprocessing:problem-formulation}
To model approximate attention during autoregressive generation, we separate the approximation problem into two stages. In the preprocessing stage, the algorithm receives the KV cache and constructs from it an arbitrary data structure. Only afterward is a single query revealed, and the algorithm must compute its attention output using the preprocessed cache.
\begin{boxedproblem}{prb:preprocessed-attention}{Approximating attention with preprocessing}
    Fix parameters \(n,d \in \sN\) and \(B \ge 0\).
    Given matrices \(\mK \in [-B,B]^{n \times d}\) and \(\mV \in [0,1]^{n \times d}\), preprocess them. Then, given a query \(\mQ \in [-B,B]^{1 \times d}\), the task is to compute a matrix \(\mT \in \sR^{1 \times d}\).
    For this output, the problem
    \begin{align*}
        \AbsPreAtt(n,d,B,\eta_\mathrm{abs})
        &\quad\text{requires}\quad
        \|\mT - \Att(\mQ,\mK,\mV)\|_{\ell_\infty} \le \eta_\mathrm{abs},\quad\text{and}\\
        \RelPreAtt(n,d,B,\eta_\mathrm{rel})
        &\quad\text{requires}\quad
        \|\mT - \Att(\mQ,\mK,\mV)\|_{\Schatten_2} \le \eta_\mathrm{rel}\|\Att(\mQ,\mK,\mV)\|_{\Schatten_2}.
    \end{align*}
\end{boxedproblem}
The naive exact implementation for \Cref{prb:preprocessed-attention} simply stores the KV cache and subsequently scans the entire cache for each query, costing \(\bigO(nd)\) preprocessing time and \(\bigO(nd)\) decoding time. Furthermore, as before, additive error \(1/2\) and relative error \(1\) are trivial to guarantee without preprocessing. The relevant question is therefore whether nontrivial uniform approximation guarantees can be achieved in truly sublinear decoding time \(\bigO(n^{1-q})\) after potentially substantial preprocessing.

\subsection{Nontrivial guarantees are impossible under the mildest assumptions} \label{subsec:preprocessing:result}
To rule out fast approximation even after substantial preprocessing, we again exploit the connection between attention and the closest pair problem from \Cref{def:bcp}. In fact, \citet[Cor.~1.3]{rubinstein2018hardness} already shows that finding an approximate closest pair still requires checking essentially all pairs even when one of the two vector sets is revealed in advance and may be preprocessed in polynomial time. Incidentally, the reduction underlying \Cref{thm:constant-error-hardness} already mirrors this structure: one vector set determines \(\mK\) and \(\mV\), while the other is used only to form \(\mQ\). The same reduction therefore naturally extends to \Cref{prb:preprocessed-attention}. We defer the proof details to \Cref{apx:proofs} and state only the conclusion here. Under the same mild assumptions as before, arbitrary polynomial-time preprocessing does not enable efficient decoding with nontrivial uniform approximation guarantees.
\begin{theorem}[store=thm:preprocessing-hardness]
    Fix \(\eta_\mathrm{abs} \in [0, \frac{1}{2})\), \(\eta_\mathrm{rel} \in [0, 1)\), and \(k,q>0\).
    Assuming \(\seth\), there exist \(C_d,C_B > 0\) such that neither \(\AbsPreAtt(n,D,B,\eta_\mathrm{abs})\) nor \(\RelPreAtt(n,D,B,\eta_\mathrm{rel})\) can be solved with \(\bigO(n^k)\) preprocessing time and \(\bigO(n^{1-q})\) decoding time, for \(D \coloneqq C_d \log n\) and \(B \coloneqq C_B \sqrt{\log n}\).
\end{theorem}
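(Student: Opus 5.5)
We reduce from the preprocessing variant of the closest pair problem, \citet[Cor.~1.3]{rubinstein2018hardness}. It states that, assuming \(\seth\), for every \(k,q>0\) there exist \(C>0\) and \(\varepsilon\in(0,1)\) with the following property. No algorithm can preprocess \(\{\vb_1,\ldots,\vb_n\}\subseteq\{0,1\}^{C\log n}\) in \(\bigO(n^k)\) time and then, for each query \(\va\in\{0,1\}^{C\log n}\), return a \((1+\varepsilon)\)-approximate closest \(\vb_j\) in \(\bigO(n^{1-q})\) time. We assume this query-by-query form, which is the natural preprocessing counterpart of \Cref{pro:bcp-hardness}. We then reuse the gadget from the proof of \Cref{thm:constant-error-hardness} one query at a time.

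\textbf{Reusing the gadget.} For a distance threshold \(t\), the keys consist of augmented encodings of \(\vb_1,\ldots,\vb_n\) together with the dummy keys. The values are \(1\) on genuine keys and \(0\) on dummies in a fixed coordinate. The query is the augmented encoding of \(\va\). The essential structural point is that \(\mK\) and \(\mV\) depend only on the \(\vb_j\) and on \(t\), and \(\va\) enters only through \(\mQ\). There are \(\bigO(\log\log n)\) or \(\bigO(\log n)\) relevant thresholds \(t\), either a geometric grid with ratio \(1+\varepsilon/3\) or all integers up to \(C\log n\). In the preprocessing phase we build all of these \((\mK_t,\mV_t)\) and run the assumed attention preprocessing on each. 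This costs \(\bigO(n^k\log n)\), which lies within a polynomial budget \(n^{k'}\) for \(k'>k\).

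\textbf{Answering a query.} Given \(\va\), we form the single-row query \(\mQ_t\) for each \(t\) and call the decoder. The gadget estimate established in the proof of \Cref{thm:constant-error-hardness} is per query row: if some \(\vb_j\) lies within distance \(t\) of \(\va\), the output coordinate is at least \(1-\delta\), and if all \(\vb_j\) lie beyond \((1+\varepsilon/3)t\), it is at most \(\delta\). Here \(\delta\) can be made arbitrarily small by choosing \(C_B\) and the number of dummies as constants depending on \(\eta\). Hence an additive guarantee \(\eta_\mathrm{abs}<1/2\) separates the two cases by thresholding at \(1/2\).

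For the relative guarantee in \(\Schatten_2\), the output is a single row, so the norm is the Euclidean norm of that row. We pad the value coordinates so that the exact output is nearly \(c\ve_1\) in one case and nearly \(c\ve_2\) in the other. Approximation with \(\eta_\mathrm{rel}<1\) then keeps the output on the correct side of the bisector. An alternative is to place a constant offset in the values, which reduces the relative guarantee to an additive one as in the main theorem.

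\textbf{Recovering the answer.} Binary search over \(t\) finds the smallest threshold that passes, up to a factor \(1+\varepsilon\), so it determines the approximate minimum distance. To recover the index \(j^\star\), and not only the distance, we do one of the following. We can split the keys into halves during preprocessing and binary-search over a tree of \(\bigO(n)\) sub-instances; the total preprocessing stays polynomial and each query takes \(\bigO(\log n)\) decoding calls. Alternatively, we can encode the index bits of \(j\) into extra value coordinates, which works because one close key dominates. With either method, each CP query costs \(\bigO(n^{1-q}\,\mathrm{polylog}\,n)\), which is \(\bigO(n^{1-q/2})\). This contradicts the preprocessing hardness with \(q/2\).

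\textbf{Main obstacle.} The delicate part is making the separation \(1-\delta\) versus \(\delta\) hold uniformly. Far keys number up to \(n\) and can sit just above the dummy score, while still keeping \(B=\bigO(\sqrt{\log n})\). This requires the gap between \(t\) and \((1+\varepsilon)t\) in inner-product form, scaled by \(B^2/D=\Theta(1)\), to produce a factor \(n^{\Omega(1)}\) separation in logits. That in turn fixes how \(C_d\) and \(C_B\) depend on \(\varepsilon\) and \(\eta\). Since this estimate is already done in the proof of \Cref{thm:constant-error-hardness} and is row-wise, I expect it to transfer verbatim. The remaining care goes into the index-recovery step and into matching the exact quantifier form of Rubinstein's corollary.
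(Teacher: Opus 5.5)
Your overall plan matches the paper's: reuse a row-wise dummy-key gadget whose \(\mK,\mV\) depend only on \(\sB\) and \(t\), and reduce from Rubinstein's preprocessing hardness. However, as written, the argument has a genuine gap at small thresholds. With entries at most \(B=C_B\sqrt{\log n}\) and \(D=C_d\log n\), a key at distance \(\le t\) beats a key at distance \(\ge(1+\varepsilon)t\) by a logit margin of at most about \(B^2\varepsilon t/(2D)=\bigO(C_B^2\varepsilon t/C_d)\). Up to \(n\) far keys can sit just below that margin, so the close key only dominates if the margin is at least of order \(\log n\), that is, \(t=\Omega(\log n)\). Your ``main obstacle'' paragraph silently assumes this. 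Yet your threshold grid includes \(t=\littleO(\log n)\). So does your search-form source problem, which must also answer queries whose nearest neighbor is at distance \(\littleO(\log n)\), even \(0\). In that regime the attention output carries no usable signal, so the reduction fails.

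The paper handles this regime separately in \Cref{lem:small-t-bcp}. It stores \(\sB\) in a trie and enumerates the Hamming ball of radius \(t\le c\log n\) around the query. This costs \(d(ed/t)^t\le n^{c\log(eC_d/c)+\littleO(1)}\le n^{1-q}\) for small enough \(c\). The gadget, with \(\beta\propto d\log n/(\varepsilon t)\), is used only for \(t>c\log n\), which is exactly what keeps the entries \(\bigO(\sqrt{\log n})\). You need this case split, or a source problem promising \(t=\bigTheta(\log n)\).

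Your relative-error argument also fails on part of the range. Suppose the exact output is nearly \(c\ve_1\) in one case and nearly \(c\ve_2\) in the other. Then the point \(\frac{c}{2}(\ve_1+\ve_2)\) lies within \(c/\sqrt2\) of both. So for \(\eta_\mathrm{rel}>1/\sqrt2\) an algorithm may always return it, and the bisector test learns nothing. Similarly, an offset bounded away from zero converts the relative guarantee into additive error \(\eta_\mathrm{rel}\cdot\bigTheta(1)\), which is no longer below \(1/2\) once \(\eta_\mathrm{rel}\ge1/2\).

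The missing idea is to make the far-case output nearly \emph{zero}. Use value \(1\) on genuine keys and \(0\) elsewhere, so \(\mY=y\,\ve_1^\top\) with \(y\le1/(n+1)\) in the far case. Then any valid \(\mT\) satisfies \(\|\mT\|_{\Schatten_2}\le(1+\eta_\mathrm{rel})/(n+1)\) in the far case and \(\|\mT\|_{\Schatten_2}\ge(1-\eta_\mathrm{rel})n/(n+1)\) in the close case. Thresholding \(\|\mT\|\) therefore works for every \(\eta_\mathrm{rel}<1\).

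Your index-recovery step is both unnecessary and unjustified. A tree descent with a gap oracle cannot maintain the invariant ``this subtree contains a point within \(t\)'', because a child may accept while its minimum lies in \((t,(1+\varepsilon)t)\). The loss then compounds to \(n^{\bigTheta(\varepsilon)}\) over \(\log n\) levels. Encoding index bits in the values fails when several near-equidistant keys share the attention.

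The paper's attention reduction avoids this by only deciding \(\PreGapBCP\) (\Cref{def:preprocessed-gap-bcp}), with \(t\) given at preprocessing time. Search is handled once, inside \Cref{pro:preprocessed-gap-bcp-hardness}: it partitions into blocks of size \(n^\gamma\) and brute-forces a single accepting block pair. That proof also balances inputs via \(\vx\mapsto(\vx,\vone-\vx)\), which the gadget needs so that inner products encode Hamming distances. Your proposal omits this step.
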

\section{Approximation remains hard for sparse attention} \label{sec:sparsity}
Attention often exhibits sparse structure in practice, with a small number of tokens receiving most of the attention for any particular query. Many practical methods therefore exploit this structure to accelerate attention during generation by identifying the relevant tokens for each query and attending only over them \citep{li2024snapkv,tang2024quest,chen2025magicpig,liu2025retrievalattention}. Sparse attention methods therefore pursue an objective not captured by \Cref{prb:approximate-attention,prb:preprocessed-attention}: rather than approximating the full attention output, they aim to retain a small set of tokens receiving most of the attention. This can preserve accurate performance at substantially reduced cost if those tokens can be identified efficiently.

Identifying the important tokens could, in principle, be easier than approximating the full attention output, and neither prior work nor our preceding lower bounds rule out this possibility. This section therefore asks whether such retrieval is possible with uniform guarantees over all inputs. To that end, \Cref{subsec:sparsity:problem-formulation} formalizes retrieval under sparsity as a computational problem. \Cref{subsec:sparsity:result} then shows that, even after arbitrary polynomial-time preprocessing of the KV cache and under the same mild assumptions as before, no fast algorithm can identify a small set of tokens receiving substantial attention for a given query, even when its attention distribution is highly concentrated.

\subsection{Problem formulation} \label{subsec:sparsity:problem-formulation}
A sparse attention method maps a query to a small subset of relevant keys and subsequently evaluates attention only over this subset. To preserve the relevant information from the full computation, the selected keys should collectively receive a substantial fraction of the total attention. We therefore allow an algorithm to preprocess the keys into an arbitrary data structure and, once a query is revealed, ask it to return a small subset capturing a constant fraction of its attention. This objective is meaningful only when the attention distribution itself exhibits sparse structure. If all keys are identical, for example, attention is uniform and no small subset can receive much of the attention. We therefore restrict the problem to queries for which the attention distribution is actually sparse by promising that some key already receives a constant fraction of the attention. Returning any small set containing such a key then suffices.

\begin{boxedproblem}{prb:sparse-attention}{Sparse attention with preprocessing}
    Fix parameters \(n,d \in \sN\), \(B\ge0\), and \(\alpha\in(0,1)\).
    Given a matrix \(\mK \in [-B,B]^{n \times d}\), preprocess it.
    Then, given a query \(\vq \in [-B,B]^{d}\), the problem \(\SparseAtt(n,d,B,\alpha)\) asks for a subset \(\sI\subseteq[n]\) of the keys receiving attention
    \[
        \textstyle
        \sum_{i\in\sI} \softmax_i(\mK\vq/d) \ge \alpha
        \quad\text{promised that}\quad
        \softmax_i(\mK\vq/d) \ge \alpha
        \quad
        \text{for some }
        i\in[n].
    \]
\end{boxedproblem}
To solve this problem, a trivial algorithm could compute all attention scores and return a singleton containing the promised key, or it could return all keys, which always captures the full attention. Both approaches require \(\bigO(n)\) time, however, and therefore provide no speedup over simply computing attention over all keys. The relevant question is therefore whether \Cref{prb:sparse-attention} can be solved in truly sublinear time \(\bigO(n^{1-q})\) after preprocessing. In particular, we count the size of the set toward the runtime, requiring it to carry truly fewer than \(\bigO(n)\) keys.

\subsection{Efficiently retaining tokens receiving substantial attention is impossible} \label{subsec:sparsity:result}
We show that, under the same mild assumptions as before, no truly sublinear algorithm can retain any constant fraction of the attention after arbitrary polynomial-time preprocessing. While we still exploit the connection between attention and the closest pair problem from \Cref{def:bcp}, the reduction of \citet{alman2023fast} underlying \Cref{thm:constant-error-hardness,thm:preprocessing-hardness} does not extend directly to sparse attention. Its construction encodes close pairs through attention scores that are large relative to those of far pairs. Yet, a closest pair instance may contain many approximately closest pairs at similar distances. On such inputs, approximating the full attention output can reveal \emph{some} such pair even when the attention distribution is diffuse, violating the promise of \Cref{prb:sparse-attention}. To obtain hardness in the presence of sparsity, we therefore modify the reduction so that one approximately closest candidate dominates all others. We achieve this by randomly perturbing the encoded keys in a way that isolates one of the approximately closest keys to receive almost all of the attention. Any subset capturing a constant fraction of the attention must then contain this key and thereby reveal an approximate closest pair. We state the resulting lower bound below and sketch the reduction, deferring the full proof to \Cref{apx:proofs}.

\begin{theorem}[store=thm:sparse-hardness]
    Fix any \(\alpha \in (0,1)\) and \(k,q>0\).
    Assuming \(\seth\), there exist \(C_d,C_B>0\) such that \(\SparseAtt(n,D,B,\alpha)\) cannot be solved with \(\bigO(n^k)\) preprocessing time and \(\bigO(n^{1-q})\) decoding time, for \(D\coloneqq C_d\log n\) and \(B\coloneqq C_B\sqrt{\log n}\).
\end{theorem}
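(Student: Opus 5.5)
We reduce from the preprocessing version of the closest pair problem, \citet[Cor.~1.3]{rubinstein2018hardness}. Assuming \(\seth\), for every \(k,q>0\) there exist \(C>0\) and \(\varepsilon\in(0,1)\) with the following property. After \(\bigO(n^k)\) preprocessing of \(\{\vb_1,\ldots,\vb_n\}\subseteq\{0,1\}^{C\log n}\), no algorithm can answer a query \(\va\) in \(\bigO(n^{1-q})\) time with an index \(j\) satisfying \(\|\va-\vb_j\|_0\le(1+\varepsilon)\min_{j'}\|\va-\vb_{j'}\|_0\).

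Suppose \(\SparseAtt(n,D,B,\alpha)\) had such an algorithm. We encode each \(\vb_j\) as a key and \(\va\) as the query in the \(\pm1\) embedding, so that \(\langle \va',\vb_j'\rangle = d-2\|\va-\vb_j\|_0\). Logits are scaled by a factor \(\lambda\), chosen so that entries stay \(\bigO(\sqrt{\log n})\); this allows logit gaps of order \(\beta\log n\) with \(\beta\) a large constant.

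We guess the optimal distance \(t\) by trying all \(\bigO(\log n)\) values. For each \(t\) we run the sparse oracle and scan the returned set \(\sI\), which costs \(|\sI|\le\bigO(n^{1-q})\) time. We output the closest candidate found, so correctness only requires that for the right \(t\), the set \(\sI\) contains some \((1+\varepsilon)\)-approximate pair.

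The main obstacle is meeting the promise. Many near-optimal keys may share the same distance, making attention diffuse, and then \(\sI\) could consist of far keys. Scaling gives the following separation. Keys at distance \(\ge(1+\varepsilon)t\) have logits at least \(\varepsilon t\lambda'\) below the optimum, and with \(t=\Theta(\log n)\) (by padding, as Rubinstein's instances allow) the total mass of the far keys is at most \(n\cdot n^{-\Omega(\beta)}\), which is negligible.

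The remaining problem is isolating a single near-optimal key. During preprocessing we add an independent random perturbation to each key. One option is an extra coordinate with a random value \(r_j\) drawn from a geometric-type grid of spacing \(\Theta(\log n)\) logits. The query has a constant entry on that coordinate, so key \(j\)'s logit shifts by \(r_j\).

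I expect to argue via an isolation-lemma style statement: with constant probability, the maximum of \(\text{logit}_j+r_j\) over the near keys exceeds the second largest by \(\ge\log(n/\alpha')\). Then that key receives at least \(\alpha\) of the mass. I would try a discretized Gumbel/exponential-spacing argument. Taking \(r_j\) uniform over \(\{0,\gamma,2\gamma,\ldots,L\gamma\}\) with \(L=n^{\bigO(1)}\) would exceed the entry bound. So I would instead use \(\bigO(\log n)\) random binary perturbation coordinates at scale \(\sqrt{\log n}\), and check that perturbations stay below \(\varepsilon t\lambda'/2\) so as not to promote far keys.

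Repeating with \(\bigO(\log n)\) independent perturbation sets (each preprocessed in polynomial time) boosts success. For any query where the promise fails, the oracle may output garbage, but we only need one good run. The total decoding cost is \(\bigO(n^{1-q}\log^2 n)\), which contradicts Rubinstein's bound for a slightly smaller \(q\). Balancing the perturbation range against the distance gap, within the \(B=\bigO(\sqrt{\log n})\) budget, is the delicate step.
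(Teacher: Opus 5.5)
\textbf{There is a genuine gap in the isolation step.} Your overall architecture matches the paper's. Keys come from the preprocessed set, the later vector is the query, a scaled distance gap makes far keys negligible, random logit shifts create a dominant near key, and independent copies are repeated. The problem is the step you defer: a separation of about \(\log n\) between the top key and the rest with \emph{constant} probability, so that \(\bigO(\log n)\) copies suffice. As stated, this is false within the entry budget.

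Every logit satisfies \(|\langle\vq,\mK_j\rangle|/D \le B^2 = C_B^2\log n\), so your shifts \(r_j\) lie in a window \([0,L]\) with \(L = R\log n\) for a constant \(R\). Now take an instance whose near keys form a ladder: measured from the top logit, about \(e^{\kappa x}\) keys lie within depth \(x\), for \(x\) up to \(W \approx L + 2\log n\). With \(n\) keys this gives \(\kappa \approx \log n/W = \Theta(1/R)\), and the ladder fits in your near window since \(G \ge 2L\).

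Let \(\Lambda(y) = \sum_j \Pr[s_j + r_j > y]\), for i.i.d.\ shifts from any distribution on \([0,L]\). A key at depth \(x\) clears level \(y\) iff \(x < r_j - y\), and the ladder count grows by \(e^{\kappa g}\) when the depth budget grows by \(g\). Hence \(\Lambda(y-g) \ge e^{\kappa g}\Lambda(y)\) in the relevant range of \(y\). Using \(\prod_{i\ne j}\Pr[v_i\le y] \le e\cdot e^{-\Lambda(y)}\), one gets
\[
\Pr\bigl[\exists j^\star \text{ beating all others by } g\bigr] \le e\int e^{-\Lambda(y-g)}\,\bigl(-d\Lambda(y)\bigr) \le e\int_0^\infty e^{-e^{\kappa g}\lambda}\,d\lambda = e\cdot e^{-\kappa g}.
\]
With \(g \approx \log n\) this is \(n^{-\Omega(1/R)}\). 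So no shift distribution (Gumbel, a geometric grid, or sums of binary coordinates) achieves constant probability, and \(\bigO(\log n)\) repetitions succeed only with probability \(o(1)\).

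\textbf{The missing idea} is to accept a polynomially small per-copy success probability and pay for it with polynomially many copies. This uses the slack between \(n^{1-q}\) and your target running time.
\begin{itemize}
\item The paper shifts by \(\Exp(\theta)\) variables with \(\theta = \gamma/2\), clipped at \(R\log n\) with \(R = 3/\theta\).
\item It scales the close/far logit gap to \((R+1)\log n\), so clipped shifts can never promote a far key.
\item \Cref{lem:exponential-separation} then gives separation by \(2\log n\) with probability at least \(e^{-\theta g} - n e^{-\theta R\log n} \ge \frac12 n^{-\gamma}\).
\item With \(T = C_T n^{\gamma}\) independently sampled key matrices and \(\gamma = q/4\), preprocessing stays \(\bigO(n^{k+1})\). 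Decoding costs \(\bigO(n^{\gamma}\cdot n^{1-q}\log n) = \bigO(n^{1-q/2})\), which still contradicts \Cref{pro:preprocessed-gap-bcp-hardness}.
\end{itemize}
The price is that \(C_B\) depends on \(q\) through \(R\).

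\textbf{Handling small \(t\).} Padding cannot make \(t = \Theta(\log n)\). Appending a fixed block adds the same amount \(P\) to every distance. A \((1+\varepsilon)\)-approximation for threshold \(t+P\) then only certifies distance \((1+\varepsilon)t + \varepsilon P\) in the original instance, which is useless when \(t\) is small. The paper instead decides instances with \(t \le c\log n\) directly. It enumerates the Hamming ball of radius \(t\) around the query and looks each point up in a trie of the preprocessed set (\Cref{lem:small-t-bcp}), which costs \(\bigO(n^{1-q})\) for small enough \(c\). The alternative would be to rely on the specific gap structure of Rubinstein's hard instances, which you would have to state and justify.
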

\begin{proof}[Proof sketch.]
    Suppose there were an algorithm for \Cref{prb:sparse-attention} with polynomial preprocessing time and truly sublinear decoding time. We use it to construct a bounded-error randomized algorithm for approximate nearest-neighbor search after polynomial preprocessing, contradicting \(\seth\) under the lower bound of \citet{rubinstein2018hardness}. As in \Cref{thm:constant-error-hardness}, we encode the preprocessed vectors as keys and the later query vector as an attention query, so that nearby vectors receive much larger scores than distant ones. The difficulty is that there may be many close keys, resulting in a diffuse attention distribution, whereas a sparse attention algorithm requires the attention distribution to be concentrated. Our main idea is to randomly perturb the key matrix so that one close key dominates all others with high probability. Concretely, we create a small number of copies of the key matrix and randomly shift each key by an exponentially distributed amount in a fixed direction. We truncate these shifts so that all entries remain bounded by \(B=\bigO(\sqrt{\log n})\) and far vectors continue to receive negligible attention. We show that, with high probability, at least one perturbed copy contains a key corresponding to an approximate closest neighbor of the query while receiving at least \(1-\frac{1}{n}>1-\alpha\) attention. For that copy, any subset capturing an \(\alpha\)-fraction of the attention must contain this key, thereby revealing an approximate closest neighbor. Running the hypothetical sparse attention algorithm on all copies still yields a truly sublinear bounded-error randomized algorithm for the closest-pair search problem and thus contradicting \(\seth\).
\end{proof}

This demonstrates that retrieval of relevant tokens under sparsity is fundamentally hard to guarantee uniformly over all inputs: in the mildest regime beyond the reach of highly accurate attention approximation with near-linear runtime, no algorithm can uniformly and efficiently identify a small set of tokens receiving substantial attention under future queries, even when attention is highly concentrated and even when allowing arbitrary polynomial-time preprocessing of the KV cache. Consequently, efficient sparse attention must exploit additional structure of the inputs beyond sparsity itself.
\section{Conclusion} \label{sec:conclusion}
Motivated by the computational cost of Transformers, we study whether attention can be approximated efficiently with uniform guarantees over all inputs. Under standard complexity-theoretic assumptions, we identify that the known tractable regimes mark the limit of what is possible: outside the regimes of existing algorithms with near-linear runtime and vanishingly small error, approximating attention with a nontrivial additive or relative guarantee uniformly over all inputs requires essentially quadratic time. Interestingly, this barrier persists during autoregressive generation after arbitrary polynomial-time preprocessing of the KV cache, and then even for retrieval under sparsity. While our results make substantial improvements in algorithms with uniform worst-case guarantees unlikely, they do not rule out algorithms that perform well empirically or obtain strong guarantees under structural assumptions on the inputs. In fact, our lower bounds substantiate the need for such assumptions: beyond the already tractable regimes, exploiting input structure is genuinely necessary to obtain meaningful guarantees for fast approximate attention.

\newpage
\section*{Acknowledgments}
Funded by the European Union.
This work has received funding from the  European High Performance Computing Joint Undertaking (JU) and from the  German Federal Ministry of Research, Technology and Space (BMFTR), the  Ministry of Culture and Science of North Rhine-Westphalia (MKW NRW) and  the Hessian Ministry of Science and Research, Arts and Culture (HMWK)  under grant agreement No \texttt{101250682}.
LH was supported by an RWTH Research Ambassador Scholarship.
CAA is supported by a Schmidt Science Fellowship.

\section*{AI Use Statement}
In the preparation of this paper, large language models were used to polish the authors' original writing. All scientific content was written and reviewed by the authors.

\bibliography{references}

@inproceedings{alman2023fast,
  author    = {Alman, Josh and Song, Zhao},
  title     = {Fast Attention Requires Bounded Entries},
  booktitle = {Advances in Neural Information Processing Systems},
  volume    = {36},
  pages     = {63117--63135},
  year      = {2023},
  doi       = {10.52202/075280-2755}
}

@article{impagliazzo2001complexity,
  author  = {Impagliazzo, Russell and Paturi, Ramamohan},
  title   = {On the Complexity of k-{SAT}},
  journal = {Journal of Computer and System Sciences},
  volume  = {62},
  number  = {2},
  pages   = {367--375},
  year    = {2001},
  doi     = {10.1006/jcss.2000.1727}
}

@inproceedings{rubinstein2018hardness,
  author    = {Rubinstein, Aviad},
  title     = {Hardness of approximate nearest neighbor search},
  booktitle = {Proceedings of the 50th Annual ACM SIGACT Symposium on Theory of Computing},
  pages     = {1260--1268},
  year      = {2018},
  doi       = {10.1145/3188745.3188916}
}

@inproceedings{duman2023computational,
  author    = {Duman Keles, Feyza and Wijewardena, Pruthuvi Mahesakya and Hegde, Chinmay},
  title     = {On The Computational Complexity of Self-Attention},
  booktitle = {Proceedings of The 34th International Conference on Algorithmic Learning Theory},
  pages     = {597--619},
  volume    = {201},
  series    = {Proceedings of Machine Learning Research},
  publisher = {PMLR},
  year      = {2023},
  url       = {https://proceedings.mlr.press/v201/duman-keles23a.html}
}

@inproceedings{gupta2026subquadratic,
 author = {Gupta, Shreya and Huang, Boyang and Saha, Barna and Xu, Yinzhan and Ye, Christopher},
 title = {Subquadratic Algorithms and Hardness for Attention with Any Temperature},
 booktitle = {International Conference on Learning Representations},
 year = {2026},
 url = {https://proceedings.iclr.cc/paper_files/paper/2026/file/8a01099096c85890b1d1aff3c6b4ea56-Paper-Conference.pdf},
}

@inproceedings{aliakbarpour2026support,
  author    = {Aliakbarpour, Maryam and Braverman, Vladimir and Yin, Junze and Zhang, Haochen},
  title     = {Support Basis: Fast Attention Beyond Bounded Entries},
  booktitle = {Proceedings of The 29th International Conference on Artificial Intelligence and Statistics},
  pages     = {325--333},
  volume    = {300},
  series    = {Proceedings of Machine Learning Research},
  publisher = {PMLR},
  year      = {2026},
  url       = {https://proceedings.mlr.press/v300/aliakbarpour26a.html}
}

@inproceedings{vaswani2017attention,
  author    = {Vaswani, Ashish and Shazeer, Noam and Parmar, Niki and Uszkoreit, Jakob and Jones, Llion and Gomez, Aidan N and Kaiser, {\L}ukasz and Polosukhin, Illia},
  title     = {Attention is All you Need},
  booktitle = {Advances in Neural Information Processing Systems},
  volume    = {30},
  pages     = {5998--6008},
  year      = {2017},
  url       = {https://proceedings.neurips.cc/paper_files/paper/2017/hash/3f5ee243547dee91fbd053c1c4a845aa-Abstract.html}
}

@inproceedings{devlin2019bert,
  author    = {Devlin, Jacob and Chang, Ming-Wei and Lee, Kenton and Toutanova, Kristina},
  title     = {{BERT}: Pre-training of Deep Bidirectional Transformers for Language Understanding},
  booktitle = {Proceedings of the 2019 Conference of the North American Chapter of the Association for Computational Linguistics: Human Language Technologies, Volume 1 (Long and Short Papers)},
  pages     = {4171--4186},
  year      = {2019},
  doi       = {10.18653/v1/N19-1423}
}

@inproceedings{dosovitskiy2020image,
  title={An Image is Worth 16x16 Words: Transformers for Image Recognition at Scale},
  author={Dosovitskiy, Alexey and Beyer, Lucas and Kolesnikov, Alexander and Weissenborn, Dirk and Zhai, Xiaohua and Unterthiner, Thomas and Dehghani, Mostafa and Minderer, Matthias and Heigold, Georg and Gelly, Sylvain and Uszkoreit, Jakob and Houlsby, Neil},
  booktitle={International Conference on Learning Representations},
  year={2021},
  url={https://openreview.net/forum?id=YicbFdNTTy}
}

@inproceedings{gulati2020conformer,
    title     = {Conformer: Convolution-augmented Transformer for Speech Recognition},
    author    = {Gulati, Anmol and Qin, James and Chiu, Chung-Cheng and Parmar, Niki and Zhang, Yu and Yu, Jiahui and Han, Wei and Wang, Shibo and Zhang, Zhengdong and Wu, Yonghui and Pang, Ruoming},
    booktitle = {Interspeech 2020},
    pages     = {5036--5040},
    year      = {2020},
    doi       = {10.21437/Interspeech.2020-3015}
}

@article{rives2021biological,
  author  = {Rives, Alexander and Meier, Joshua and Sercu, Tom and Goyal, Siddharth and Lin, Zeming and Liu, Jason and Guo, Demi and Ott, Myle and Zitnick, C. Lawrence and Ma, Jerry and Fergus, Rob},
  title   = {Biological structure and function emerge from scaling unsupervised learning to 250 million protein sequences},
  journal = {Proceedings of the National Academy of Sciences},
  volume  = {118},
  number  = {15},
  pages   = {e2016239118},
  year    = {2021},
  doi     = {10.1073/pnas.2016239118}
}

@article{zhou2021informer,
  author  = {Zhou, Haoyi and Zhang, Shanghang and Peng, Jieqi and Zhang, Shuai and Li, Jianxin and Xiong, Hui and Zhang, Wancai},
  title   = {Informer: Beyond Efficient Transformer for Long Sequence Time-Series Forecasting},
  journal = {Proceedings of the AAAI Conference on Artificial Intelligence},
  volume  = {35},
  number  = {12},
  pages   = {11106--11115},
  year    = {2021},
  doi     = {10.1609/aaai.v35i12.17325}
}

@inproceedings{kitaev2020reformer,
    title     = {Reformer: The Efficient Transformer},
    author    = {Kitaev, Nikita and Kaiser, {\L}ukasz and Levskaya, Anselm},
    booktitle = {International Conference on Learning Representations},
    year      = {2020},
    url       = {https://openreview.net/forum?id=rkgNKkHtvB}
}

@article{xiong2021nystromformer,
  author  = {Xiong, Yunyang and Zeng, Zhanpeng and Chakraborty, Rudrasis and Tan, Mingxing and Fung, Glenn and Li, Yin and Singh, Vikas},
  title   = {{Nystr{\"o}mformer}: A {Nystr{\"o}m}-based Algorithm for Approximating Self-Attention},
  journal = {Proceedings of the AAAI Conference on Artificial Intelligence},
  volume  = {35},
  number  = {16},
  pages   = {14138--14148},
  year    = {2021},
  doi     = {10.1609/aaai.v35i16.17664}
}

@inproceedings{tay2020sparse,
  author    = {Tay, Yi and Bahri, Dara and Yang, Liu and Metzler, Donald and Juan, Da-Cheng},
  title     = {Sparse {S}inkhorn Attention},
  booktitle = {Proceedings of the 37th International Conference on Machine Learning},
  pages     = {9438--9447},
  volume    = {119},
  series    = {Proceedings of Machine Learning Research},
  publisher = {PMLR},
  year      = {2020},
  url       = {https://proceedings.mlr.press/v119/tay20a.html}
}

@article{roy2021routing,
  author  = {Roy, Aurko and Saffar, Mohammad and Vaswani, Ashish and Grangier, David},
  title   = {Efficient Content-Based Sparse Attention with {Routing Transformers}},
  journal = {Transactions of the Association for Computational Linguistics},
  volume  = {9},
  pages   = {53--68},
  year    = {2021},
  doi     = {10.1162/tacl_a_00353}
}

@inproceedings{vyas2020fast,
    author    = {Vyas, Apoorv and Katharopoulos, Angelos and Fleuret, Fran{\c{c}}ois},
    title     = {Fast Transformers with Clustered Attention},
    booktitle = {Advances in Neural Information Processing Systems},
    volume    = {33},
    pages     = {21665--21674},
    year      = {2020},
    url       = {https://proceedings.neurips.cc/paper/2020/hash/f6a8dd1c954c8506aadc764cc32b895e-Abstract.html}
}

@inproceedings{chen2021scatterbrain,
    author    = {Chen, Beidi and Dao, Tri and Winsor, Eric and Song, Zhao and Rudra, Atri and R{\'e}, Christopher},
    title     = {Scatterbrain: Unifying Sparse and Low-rank Attention},
    booktitle = {Advances in Neural Information Processing Systems},
    volume    = {34},
    pages     = {17413--17426},
    year      = {2021},
    url       = {https://proceedings.neurips.cc/paper/2021/hash/9185f3ec501c674c7c788464a36e7fb3-Abstract.html}
}

@inproceedings{han2024hyperattention,
 author = {Han, Insu and Jayaram, Rajesh and Karbasi, Amin and Mirrokni, Vahab and Woodruff, David and Zandieh, Amir},
 booktitle = {International Conference on Learning Representations},
 title = {{HyperAttention}: Long-context Attention in Near-Linear Time},
 year = {2024},
 url = {https://proceedings.iclr.cc/paper_files/paper/2024/file/ab5aa940590399350401c57cdf52ce78-Paper-Conference.pdf}
}

@inproceedings{schroder2026wildcat,
  author    = {Schr{\"o}der, Tobias and Mackey, Lester},
  title     = {{WildCat}: Near-Linear Attention in Theory and Practice},
  booktitle = {Proceedings of the 43rd International Conference on Machine Learning},
  year      = {2026},
  url       = {https://openreview.net/forum?id=lfqyLp4hZm}
}

@inproceedings{liu2024kivi,
  author    = {Liu, Zirui and Yuan, Jiayi and Jin, Hongye and Zhong, Shaochen and Xu, Zhaozhuo and Braverman, Vladimir and Chen, Beidi and Hu, Xia},
  title     = {{KIVI}: A Tuning-Free Asymmetric 2bit Quantization for {KV} Cache},
  booktitle = {Proceedings of the 41st International Conference on Machine Learning},
  pages     = {32332--32344},
  volume    = {235},
  series    = {Proceedings of Machine Learning Research},
  publisher = {PMLR},
  year      = {2024},
  url       = {https://proceedings.mlr.press/v235/liu24bz.html}
}

@inproceedings{lin2022fqvit,
  author    = {Lin, Yang and Zhang, Tianyu and Sun, Peiqin and Li, Zheng and Zhou, Shuchang},
  title     = {{FQ-ViT}: Post-Training Quantization for Fully Quantized Vision Transformer},
  booktitle = {Proceedings of the Thirty-First International Joint Conference on Artificial Intelligence},
  pages     = {1173--1179},
  year      = {2022},
  doi       = {10.24963/ijcai.2022/164}
}

@inproceedings{michel2019sixteen,
  author    = {Michel, Paul and Levy, Omer and Neubig, Graham},
  title     = {Are Sixteen Heads Really Better than One?},
  booktitle = {Advances in Neural Information Processing Systems},
  volume    = {32},
  pages     = {14014--14024},
  year      = {2019},
  url       = {https://proceedings.neurips.cc/paper_files/paper/2019/hash/2c601ad9d2ff9bc8b282670cdd54f69f-Abstract.html}
}

@inproceedings{li2024snapkv,
  author    = {Li, Yuhong and Huang, Yingbing and Yang, Bowen and Venkitesh, Bharat and Locatelli, Acyr and Ye, Hanchen and Cai, Tianle and Lewis, Patrick and Chen, Deming},
  title     = {{SnapKV}: {LLM} Knows What You are Looking for Before Generation},
  booktitle = {Advances in Neural Information Processing Systems},
  volume    = {37},
  pages     = {22947--22970},
  year      = {2024},
  doi       = {10.52202/079017-0722}
}

@inproceedings{tang2024quest,
    title     = {{QUEST}: Query-Aware Sparsity for Efficient Long-Context {LLM} Inference},
    author    = {Tang, Jiaming and Zhao, Yilong and Zhu, Kan and
                 Xiao, Guangxuan and Kasikci, Baris and Han, Song},
    booktitle = {Proceedings of the 41st International Conference on Machine Learning},
    pages     = {47901--47911},
    year      = {2024},
    volume    = {235},
    series    = {Proceedings of Machine Learning Research},
    publisher = {PMLR},
    url       = {https://proceedings.mlr.press/v235/tang24l.html}
}

@inproceedings{chen2025magicpig,
 author = {Chen, Zhuoming and Sadhukhan, Ranajoy and Ye, Zihao and Zhou, Yang and Zhang, Jianyu and Nolte, Niklas and Tian, Yuandong and Douze, Matthijs and Bottou, Leon and Jia, Zhihao and Chen, Beidi},
 title = {{MagicPIG}: {LSH} Sampling for Efficient {LLM} Generation},
 booktitle = {International Conference on Learning Representations},
 year = {2025},
 url = {https://proceedings.iclr.cc/paper_files/paper/2025/file/6d50d824ae819d5a961c1d8edc15e833-Paper-Conference.pdf}
}

@inproceedings{liu2025retrievalattention,
  author    = {Liu, Di and Chen, Meng and Lu, Baotong and Jiang, Huiqiang and Han, Zhenhua and Zhang, Qianxi and Chen, Qi and Zhang, Chengruidong and Ding, Bailu and Zhang, Kai and Chen, Chen and Yang, Fan and Yang, Yuqing and Qiu, Lili},
  title     = {{RetrievalAttention}: Accelerating Long-Context {LLM} Inference via Vector Retrieval},
  booktitle = {Advances in Neural Information Processing Systems},
  volume    = {38},
  pages     = {54358--54385},
  year      = {2025},
  doi       = {10.52202/085713-1816}
}

@article{child2019sparse,
  author  = {Child, Rewon and Gray, Scott and Radford, Alec and Sutskever, Ilya},
  title   = {Generating Long Sequences with {Sparse Transformers}},
  journal = {arXiv preprint arXiv:1904.10509},
  year    = {2019},
  url     = {https://arxiv.org/abs/1904.10509}
}

@inproceedings{zaheer2020bigbird,
  author    = {Zaheer, Manzil and Guruganesh, Guru and Dubey, Kumar Avinava and Ainslie, Joshua and Alberti, Chris and Onta{\~n}{\'o}n, Santiago and Pham, Philip and Ravula, Anirudh and Wang, Qifan and Yang, Li and Ahmed, Amr},
  title     = {{Big Bird}: Transformers for Longer Sequences},
  booktitle = {Advances in Neural Information Processing Systems},
  volume    = {33},
  pages     = {17283--17297},
  year      = {2020},
  url       = {https://proceedings.neurips.cc/paper/2020/hash/c8512d142a2d849725f31a9a7a361ab9-Abstract.html}
}

@article{wang2020linformer,
  author  = {Wang, Sinong and Li, Belinda Z. and Khabsa, Madian and Fang, Han and Ma, Hao},
  title   = {{Linformer}: Self-Attention with Linear Complexity},
  journal = {arXiv preprint arXiv:2006.04768},
  year    = {2020},
  url     = {https://arxiv.org/abs/2006.04768}
}

@inproceedings{zandieh2023kdeformer,
  author    = {Zandieh, Amir and Han, Insu and Daliri, Majid and Karbasi, Amin},
  title     = {{KDE}former: Accelerating Transformers via Kernel Density Estimation},
  booktitle = {Proceedings of the 40th International Conference on Machine Learning},
  pages     = {40605--40623},
  volume    = {202},
  series    = {Proceedings of Machine Learning Research},
  publisher = {PMLR},
  year      = {2023},
  url       = {https://proceedings.mlr.press/v202/zandieh23a.html}
}

@inproceedings{choromanski2021performer,
  author    = {Choromanski, Krzysztof Marcin and Likhosherstov, Valerii and Dohan, David and Song, Xingyou and Gane, Andreea and Sarl{\'o}s, Tam{\'a}s and Hawkins, Peter and Davis, Jared Quincy and Mohiuddin, Afroz and Kaiser, {\L}ukasz and Belanger, David Benjamin and Colwell, Lucy J. and Weller, Adrian},
  title     = {Rethinking Attention with {Performers}},
  booktitle = {International Conference on Learning Representations},
  year      = {2021},
  url       = {https://openreview.net/forum?id=Ua6zuk0WRH}
}

@article{beltagy2020longformer,
  title={Longformer: The long-document transformer},
  author={Beltagy, Iz and Peters, Matthew E and Cohan, Arman},
  journal={arXiv preprint arXiv:2004.05150},
  year={2020},
  url={https://arxiv.org/abs/2004.05150}
}

@inproceedings{dao2024flashattention,
 author = {Dao, Tri},
 title = {{FlashAttention-2}: Faster Attention with Better Parallelism and Work Partitioning},
 booktitle = {International Conference on Learning Representations},
 year = {2024},
 url = {https://proceedings.iclr.cc/paper_files/paper/2024/file/98ed250b203d1ac6b24bbcf263e3d4a7-Paper-Conference.pdf},
}
\bibliographystyle{iclr2027_conference}

\newpage
\appendix
\crefalias{section}{appendix}
\section{Proofs} \label{apx:proofs}
This appendix contains the formal proofs of \Cref{thm:constant-error-hardness,thm:preprocessing-hardness,thm:sparse-hardness}.
\Cref{subsec:proofs:source-problem} presents a variant of the approximate closest pair problem, to which all our results reduce, and connects it to the lower bounds of \citet{rubinstein2018hardness}.
\Cref{subsec:proofs:encoding-closest-pairs-with-attention} shows how even a coarse approximation of attention can be used to identify approximate closest pairs, which \Cref{subsec:proofs:output-approximation} uses to prove \Cref{thm:constant-error-hardness,thm:preprocessing-hardness}.
Similarly, \Cref{subsec:proofs:encoding-closest-pairs-with-sparse-attention} shows how sparse attention retrieval can be used to identify approximate closest pairs, which \Cref{subsec:proofs:sparse-attention} uses to prove \Cref{thm:sparse-hardness}.

\subsection{Source problem} \label{subsec:proofs:source-problem}
We prove all our results by showing that a fast algorithm for attention with a given approximation guarantee would ultimately give a fast algorithm for the approximate closest pairs problem below, which \citet{rubinstein2018hardness} proves is impossible unless \(\seth\) fails.

\getkeytheorem{def:bcp}
\getkeytheorem{pro:bcp-hardness}

\begin{remark} \label{rem:bcp-randomized-hardness}
    Although \Cref{pro:bcp-hardness}, as stated, only concerns deterministic algorithms it also holds against bounded-error randomized algorithms under randomized \(\seth\). Indeed, both the standard reduction from \(k\)-\(\sat\) to Orthogonal Vectors and the subsequent reduction of \citet{rubinstein2018hardness} to \(\bcp\) are deterministic. Consequently, a bounded-error randomized algorithm for \(\bcp\) with truly subquadratic runtime would contradict \(\seth\) as stated in \Cref{hyp:seth}.
\end{remark}

For convenience, we define the decision variant of this problem with preprocessing.
We also follow \citet{alman2023fast} in working with inputs of balanced Hamming weights, that is, vectors in dimension \(2d\) that have Hamming weight \(d\).
To that end, define, for even \(d \in \sN\), the slice \(\sM_d \coloneqq \{ \vv \in \{0,1\}^d \mid \|\vv\|_0 = d/2\}\) of the Boolean cube with balanced Hamming weight.

\begin{definition}[Preprocessed Gap Closest Pair] \label{def:preprocessed-gap-bcp}
    For \(n,m,d\in\sN\) and \(\varepsilon>0\), the problem \(\PreGapBCP(n,m,d,\varepsilon)\) is the following.
    Given a threshold \(t \in [0,d]\) and set \(\sB=\{\vb_1,\ldots,\vb_n\} \subseteq \sM_d\), preprocess them.
    Then, given vectors \(\va_1,\ldots,\va_m \in \sM_d\), decide if
    \[
        \min_{i\in[m], \ j\in[n]} \|\va_i-\vb_j\|_0 \le t
        \qquad\text{or}\qquad
        \min_{i\in[m], \ j\in[n]} \|\va_i-\vb_j\|_0 > (1+\varepsilon)t,
    \]
    promised that one of the two holds.
\end{definition}

Solving this problem much more efficiently than checking all \(mn\) possible pairs remains hard even for bounded-error randomized algorithm with polynomial preprocessing and under the restriction to balanced inputs, as the following proposition shows.
The proof effectively combines the arguments behind \citet[Cor.~1.3]{rubinstein2018hardness} and \citet[Rem.~4.4]{alman2023fast} together with a standard boosting trick for randomization.

\begin{proposition}[Balanced gap version of \citet[Cor.~1.3]{rubinstein2018hardness}] \label{pro:preprocessed-gap-bcp-hardness}
    Fix \(k,q>0\).
    Suppose that \(\seth\) holds.
    Then, there exist \(C,\varepsilon>0\) such that \(\PreGapBCP(n,m(n),d,\varepsilon)\) cannot be solved, not even by a bounded-error randomized algorithm, with \(\bigO(n^k)\) preprocessing time and \(\bigO(m(n)n^{1-q})\) query time, for \(d \coloneqq C\log n\) and any function \(m: \sN\to\sN\) with \(m(n) \in [2n]\).
\end{proposition}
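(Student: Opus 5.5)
We prove the contrapositive. Suppose some bounded-error randomized algorithm \(\mathcal{A}\) solves \(\PreGapBCP(n,m(n),d,\varepsilon)\) with \(\bigO(n^k)\) preprocessing and \(\bigO(m(n)n^{1-q})\) query time. We then build an algorithm contradicting \citet[Cor.~1.3]{rubinstein2018hardness}, the preprocessed nearest-neighbor version of \Cref{pro:bcp-hardness}. That corollary states: under \(\seth\), there exist \(C_0,\varepsilon_0\) such that, after polynomial preprocessing of \(\sB\subseteq\{0,1\}^{C_0\log n}\), no algorithm can answer a single query \(\va\) with a \((1+\varepsilon_0)\)-approximate nearest neighbor in \(\bigO(n^{1-q})\) time. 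By \Cref{rem:bcp-randomized-hardness}, this also holds against bounded-error randomized algorithms. The reduction has three steps: balancing, search-to-decision, and amplification.

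\textbf{Balancing.} Following \citet[Rem.~4.4]{alman2023fast}, map \(\vv\in\{0,1\}^{d_0}\) to \((\vv,\bar{\vv})\in\sM_{2d_0}\), where \(\bar{\vv}\) is the complement of \(\vv\). This map exactly doubles all Hamming distances, so approximation ratios are preserved, and it changes the constant \(C\) only by a factor of \(2\).

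\textbf{Search to decision.} Thresholds are integers in \([0,d]\) with \(d=\bigO(\log n)\). In preprocessing, we run \(\mathcal{A}\)'s preprocessing once for each threshold \(t\) in the geometric grid \(t_j=\lceil(1+\varepsilon/3)^j\rceil\), plus \(t=0\). This is only \(\bigO(\log\log n)\) copies, each of cost \(\bigO(n^k)\).

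Given a query \(\va\), we use \(m=1\), or we pad to any allowed \(m(n)\) by repeating \(\va\), which costs \(\bigO(m n^{1-q})\). We first binary search over the grid for the smallest threshold \(t^\star\) at which \(\mathcal{A}\) answers "\(\le t\)". Any \(t\) violating the promise yields an arbitrary answer, but consistency still holds in the sense that the true minimum lies in \((t^\star_{-},(1+\varepsilon)t^\star]\), where \(t^\star_{-}\) is the previous grid point.

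Next, to extract an actual neighbor rather than a distance, we recursively split \(\sB\) into halves. Preprocessing stores the structure for every node of a balanced binary tree over \(\sB\), for all grid thresholds. The total cost is \(\bigO(n^k\log n\log\log n)\), which is polynomial.

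At query time, we descend the tree, at each node querying both children at threshold \(t^\star\) and following one that answers "\(\le t^\star\)". The query cost is \(\sum_\ell \bigO((n/2^\ell)^{1-q})=\bigO(n^{1-q})\), a geometric series.

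Errors compound here, since a child may answer "yes" under a violated promise. This is the main obstacle. I would handle it by allowing the ratio to degrade: choosing the gap of \(\mathcal{A}\) as \(\varepsilon=\varepsilon_0/2\) and using the grid, the leaf reached has distance at most \((1+\varepsilon)t^\star\le(1+\varepsilon)^2\) times the optimum, up to grid rounding. Here we use that any "yes" answer certifies that the true minimum in that subtree is at most \((1+\varepsilon)t^\star\), because otherwise the promise would hold with answer "no". We finally verify the leaf's distance directly. If the descent dead-ends, with both children answering "no" despite the parent answering "yes", we treat that as a failure event of a randomized call.

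\textbf{Amplification.} The procedure makes \(\bigO(\log n)\) calls to \(\mathcal{A}\). We boost each call by \(\bigO(\log\log n)\) independent repetitions and a majority vote, using independent preprocessing randomness where needed, which is still polynomial. A union bound then gives an overall constant success probability.

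The total query time is \(\bigO(n^{1-q}\,\mathrm{polylog}\,n)=\bigO(n^{1-q/2})\). This yields a \((1+\varepsilon_0)\)-approximate nearest-neighbor algorithm with polynomial preprocessing and truly sublinear query time, contradicting \seth\ via \citet[Cor.~1.3]{rubinstein2018hardness}. The case of general \(m(n)\in[2n]\) follows because \(m\) queries are answered independently, giving total time \(\bigO(m n^{1-q/2})\). The hard instance for fixed \(m\) is obtained by embedding the single query, using \(m=1\) hardness and padding by repetition.
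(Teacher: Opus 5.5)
Your proposal has two genuine gaps.

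\textbf{The tree descent.} A ``yes'' at threshold $t^\star$ on a node only certifies that the node's minimum distance to $\va$ is at most $(1+\varepsilon)t^\star$. That minimum may lie in the promise gap $(t^\star,(1+\varepsilon)t^\star]$. This can happen already at the root, since your search only guarantees that the optimum is at most $(1+\varepsilon)t^\star$. It can also happen at any later node, if you followed a child whose ``yes'' was a gap answer. Both children then have minimum $>t^\star$, and a perfectly correct oracle may reject both.

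So a dead end is not a failure of a randomized call. It can occur with probability one against a valid algorithm, and majority voting does not remove it. Raising the threshold whenever you get stuck costs a factor $(1+\varepsilon)$ per level, so up to $(1+\varepsilon)^{\Theta(\log n)}$ overall.

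The remedy is to avoid a deep search. Split $\sB$ once into blocks of size $r=n^{\gamma}$, find the smallest threshold at which some block accepts, and brute-force that single block. Note also that the oracle is only guaranteed on instances whose dimension is $C\log$ of their own size. Sub-instances must therefore be padded, or the source dimension rescaled; the paper starts from dimension $\frac{\gamma C}{2}\log n$ for exactly this reason.

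\textbf{The batch size.} The statement must hold for every $m(n)\in[2n]$, and $m(n)=n$ is precisely the case used for \Cref{thm:constant-error-hardness}. Padding one nearest-neighbor query to $m(n)$ copies costs $\bigO(m(n)n^{1-q})$. This is superlinear for $m(n)=\Theta(n)$, so nothing contradicts Rubinstein's single-query Cor.~1.3. Your remark that the $m$ queries ``are answered independently'' argues the wrong direction: it turns a single-query algorithm into a batch one, not the converse.

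To amortize a batch cost of $m(n)\,n^{1-q}$, you need $m(n)$ genuinely distinct queries. That means reducing from the offline problem of \Cref{pro:bcp-hardness}, which is the paper's route:
\begin{itemize}
\item It splits $\widehat\sB$ into chunks of size $r=n^{1/(2k)}$, so preprocessing all chunks costs $\bigO(n^{3/2}\log n)$ per threshold, and splits $\widehat\sA$ into batches of size $m(r)$.
\item For $t=0,1,\ldots,\gamma d$ in increasing order, it runs $\bigO(\log n)$-fold majority-amplified decision calls on every chunk--batch pair.
\item It brute-forces the first accepting pair in $\bigO(r^2d)$ time.
\end{itemize}
The total time is $\bigO(n^{2-q\gamma/2})$ uniformly in $m$, which contradicts \Cref{pro:bcp-hardness}.
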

\begin{proof}
    Assume w.l.o.g. that \(k>1\) and \(q \le 1/4\).
    Let \(\gamma \coloneqq 1/(2k)\) and \(\bar q \coloneqq q\gamma/2\).
    By \Cref{pro:bcp-hardness}, there exist \(C>0\) and \(\varepsilon\in(0,1)\) such that \(\bcp(n,\frac{\gamma C}{2}\log n,\varepsilon)\) cannot be solved in \(\bigO(n^{2-\bar q})\) time.
    Suppose, for contradiction, that \(\PreGapBCP(n,m(n),d,\varepsilon)\) can be solved by a bounded-error randomized algorithm, for some function \(m: \sN\to\sN\) with \(m(n) \in [2n]\), with \(\bigO(n^k)\) preprocessing time and \(\bigO(m(n)n^{1-q})\) query time.
    We construct an algorithm for \(\bcp(n,\gamma d/2,\varepsilon)\) running in \(\bigO(n^{2-\bar q})\) time, contradicting \Cref{pro:bcp-hardness}.
    To that end, let sets \(\sA=\{\va_1,\ldots,\va_n\} \subseteq \{0,1\}^{\gamma d/2}\) and \(\sB=\{\vb_1,\ldots,\vb_n\} \subseteq \{0,1\}^{\gamma d/2}\) be inputs to \(\bcp(n,\frac{\gamma C}{2}\log n,\varepsilon)\).
    We replace them with the balanced sets \(\widehat\sA\coloneqq\{\widehat\va_1,\ldots,\widehat\va_n\}\subseteq\sM_{\gamma d}\) and \(\widehat\sB\coloneqq\{\widehat\vb_1,\ldots,\widehat\vb_n\}\subseteq\sM_{\gamma d}\), where \(\widehat\vx \coloneqq (\vx, \vone-\vx) \in \sM_{\gamma d}\), so that \(\|\va-\vb\|_0=\frac{1}{2} \|\widehat\va-\widehat\vb\|_0\).

    We now devise the algorithm.
    Let \(r \coloneqq n^\gamma\) and \(s \coloneqq m(r)\) so that \(\gamma d=C\log r\).
    Partition \(\widehat\sB\) into sets \(\widehat\sB_1,\ldots,\widehat\sB_{\lceil n/r \rceil}\), each of size at most \(r\).
    Then, for each threshold \(t \in \{0,\ldots,\gamma d\}\) in increasing order, do the following.
    Apply the assumed preprocessing algorithm for \(\PreGapBCP\) independently \(L\coloneqq C_L\log n\) times to every set \(\widehat\sB_j\) separately, padding it to \(r\) separate vectors by duplication when necessary.
    Additionally, partition \(\widehat\sA\) into sets \(\widehat\sA_1,\ldots,\widehat\sA_{\lceil n/s \rceil}\), each of size at most \(s\).
    On each preprocessed subset \(\widehat\sB_j\), independently run the corresponding \(L\) decoding algorithms on each query bulk \(\widehat\sA_i\), again padding them to \(s=m(r)\) vectors when necessary, and use their majority answer.
    That is, accept if at least half of the independent invocations of the decoding algorithm accept.

    Choosing \(C_L\) as a sufficiently large constant guarantees that, on every promised input, each majority answer is incorrect with probability at most \(n^{-4}\).
    Overall, we compute at most \(\bigO((\gamma d+1)(n/r)(n/s)) = \bigO(n^2 \log n)\) such answers.
    Hence, by a union bound, with probability at least
    \[
        1-\bigO(n^2\log n)n^{-4} \ge \frac{2}{3},
    \]
    all majority answers on promised inputs are correct simultaneously.
    Condition on that event and consider the smallest \(t\) for which the majority answer accepts on some \(\widehat\sB_j\) and \(\widehat\sA_i\).
    Then,
    \[
        \min_{\widehat\va\in\widehat\sA_i, \ \widehat\vb\in\widehat\sB_j} \|\widehat\va-\widehat\vb\|_0
        \le
        (1+\varepsilon) t
        \le
        (1+\varepsilon) \min_{\widehat\va\in\widehat\sA, \widehat\vb\in\widehat\sB} \|\widehat\va-\widehat\vb\|_0,
    \]
    where the first inequality uses that \(\min_{\widehat\va\in\widehat\sA_i, \widehat\vb\in\widehat\sB_j} \|\widehat\va-\widehat\vb\|_0 > (1+\varepsilon)t\) would force the decoding algorithm to reject by the problem definition, and the second inequality uses that the algorithm must accept once \(t=\min_{\widehat\va\in\widehat\sA, \widehat\vb\in\widehat\sB} \|\widehat\va-\widehat\vb\|_0\).
    Consequently, computing \(\arg\min_{\widehat\va\in\widehat\sA_i, \widehat\vb\in\widehat\sB_j}\) gives a pair \((\va,\vb)\) for the original input with
    \[
        \|\va-\vb\|_0
        =
        \frac{1}{2} \|\widehat\va-\widehat\vb\|_0
        \le
        \frac{1}{2} (1+\varepsilon) \min_{\widehat\va'\in\widehat\sA, \widehat\vb'\in\widehat\sB} \|\widehat\va'-\widehat\vb'\|_0
        =
        (1+\varepsilon) \min_{\va'\in\sA, \vb'\in\sB} \|\va'-\vb'\|_0
    \]
    satisfying the output condition of \(\bcp\).
    Since we only need to search the small subsets \(\widehat\sA_i\) and \(\widehat\sB_j\), such a pair can be computed by brute force in \(\bigO(rsd) \le \bigO(r^2d) \le \bigO(n\log n)\) time.

    Consequently, we obtain an algorithm with bounded error for \(\bcp(n,\gamma d/2,\varepsilon)\).
    For each threshold, the preprocessing takes \(\bigO(L\frac{r^kn}{r}) \le \bigO(n^{3/2}\log n)\) time and the decoding takes \(\bigO(L\frac{n}{r}\frac{n}{s}sr^{1-q}) \le \bigO(n^{2-q\gamma} \log n)\) time.
    Repeating for all \(\gamma d+1\) thresholds adds a factor \(\bigO(d) = \bigO(\log n)\).
    Thus, the algorithm runs in \(\bigO(n^{2-q\gamma/2})\) time, contradicting \Cref{pro:bcp-hardness} together with \Cref{rem:bcp-randomized-hardness}.
\end{proof}

We also use the fact that the problem is easy to solve when the decision threshold \(t\) is small.
To prove this, we apply the argument of \citet{alman2023fast}[Lem.~B.1, Case~1] to our particular problem formulation to give a brute-force algorithm searching the immediate vicinity of every preprocessed vector, resulting in \(\bigO(mn^{1-q})\) runtime when the search space is sufficiently small.

\begin{lemma} \label{lem:small-t-bcp}
    Fix \(C_d>0\) and \(q\in(0,1)\).
    There exists a constant \(c\in(0,C_d/2)\) such that \(\PreGapBCP(n,m,C_d\log n,\varepsilon)\) can be solved with \(\bigO(nd)\) preprocessing time and \(\bigO(mn^{1-q})\) decoding time whenever the input threshold \(t\) satisfies \(t\le c\log n\).
\end{lemma}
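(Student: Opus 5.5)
The plan is to store $\sB$ in a hash table during preprocessing, and at decoding time enumerate the Hamming ball of radius $t$ around each query $\va_i$, looking every point of that ball up in the table. The work then is in bounding the size of these balls so that the decoding stays within $\bigO(mn^{1-q})$.

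\textbf{Preprocessing.} Insert every $\vb_j \in \sB$ into a hash table keyed by the $d$-bit string, where $d \coloneqq C_d\log n$. This costs $\bigO(nd)$ time in expectation. If we want a deterministic bound, we can instead sort the $n$ strings lexicographically, which gives $\bigO(nd)$ with radix sort over $d$-bit words.

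\textbf{Decoding.} For each $\va_i$, $i\in[m]$, enumerate every $\vx\in\{0,1\}^d$ with $\|\va_i-\vx\|_0\le t$ and test whether $\vx$ is in the table. Accept if some lookup succeeds, and reject otherwise. Correctness is immediate in both directions:
\begin{itemize}
\item[] If $\min_{i,j}\|\va_i-\vb_j\|_0\le t$, the witness $\vb_j$ lies in the ball around $\va_i$, so we accept.
\item[] If the minimum exceeds $(1+\varepsilon)t>t$, no point of $\sB$ lies in any ball, so we reject.
\end{itemize}
The decoding does not need the promise gap at all.

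\textbf{Cost.} Each query examines $\sum_{s\le t}\binom{d}{s}$ strings, at $\bigO(d)$ per string for generation and hashing. The key estimate is
\[
\sum_{s\le t}\binom{d}{s}\le \Bigl(\frac{ed}{t}\Bigr)^{t}.
\]
The right-hand side is increasing in $t$ for $t\le d$. With $t\le c\log n$ and $d=C_d\log n$, it is at most
\[
\Bigl(\frac{eC_d}{c}\Bigr)^{c\log n}=n^{c\log(eC_d/c)}.
\]
Since $c\log(eC_d/c)\to0$ as $c\to0$, we can choose $c\in(0,C_d/2)$ small enough that $c\log(eC_d/c)\le 1-q-\delta$ for some $\delta>0$. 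The extra factor $d=\bigO(\log n)$ is then absorbed, because $n^{-\delta}\log n=\bigO(1)$. This gives total decoding time $\bigO(m\,n^{1-q})$.

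\textbf{Edge cases.} When $t=0$, or when $t$ is not an integer, we replace $t$ by $\lfloor t\rfloor$. This leaves the ball unchanged and the bound still holds.

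The main obstacle is this binomial-sum estimate together with the choice of $c$. It has to be done in a way that tolerates the logarithmic per-lookup overhead, and it must depend only on $C_d$ and $q$, not on $n$ or $\varepsilon$. Everything else is routine.
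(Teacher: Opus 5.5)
Your proof is correct and follows the paper's argument: store $\sB$ in a lookup structure, enumerate the radius-$t$ Hamming ball around each $\va_i$, bound its size by $(ed/t)^t \le n^{c\log(eC_d/c)}$, and choose $c$ with $c\log(eC_d/c) < 1-q$ so that the extra $\bigO(\log n)$ factor is absorbed. The only difference is the data structure: the paper uses a trie, which gives deterministic $\bigO(nd)$ preprocessing and $\bigO(d)$ lookups directly, while your deterministic sorted-array fallback pays an extra $\log n$ factor per lookup for binary search, which your $\delta$ slack absorbs.
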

\begin{proof}
    Let \(d\coloneqq C_d\log n\).
    Choose any \(c\in(0,C_d/2)\) such that \(c \log\left( \frac{eC_d}{c} \right) < 1-q\).
    Such a choice is always possible since \(\lim_{c \searrow 0} c \log\left( \frac{eC_d}{c} \right) = 0\).
    Now, let \(\sB\coloneqq\{\vb_1,\ldots,\vb_n\}\subseteq\sM_d\) be the input set to \(\PreGapBCP(n,m,d,\varepsilon)\) and \(\va_1,\ldots,\va_m\in\sM_d\) the query revealed after preprocessing.
    We give a brute-force algorithm for deciding this input, following the Hamming-ball enumeration of \citet[Lem.~B.1, Case 1]{alman2023fast}.
    Preprocess \(\sB\) by storing it in a trie.
    This takes \(\bigO(nd)\) time.
    Then, enumerate, for all \(i\in[m]\) separately, all vectors \(\vb\in\{0,1\}^d\) satisfying \(\|\va_i-\vb\|_0\le t\), and accept if any such \(\vb\) belongs to \(\sB\).
    This correctly decides the instance: if some \(\va_i\) has a close partner in \(\sB\), exhaustive enumeration finds it, and if no \(\va_i\) has such a partner, no partner is found.
    Since \(t\le c\log n<d/2\), the decoding time is
    \begin{align*}
        \bigO\left(
            md\sum_{k=0}^{\lfloor t\rfloor}
            \binom{d}{k}
        \right)
        \le
        \bigO\left(
            md\left(\frac{ed}{t}\right)^t
        \right)
        &\le
        \bigO\left(
            m\log n\left(\frac{eC_d\log n}{c\log n}\right)^{c\log n}
        \right)
        \\&\le
        \bigO\left(
            mn^{c\log(eC_d/c)+\littleO(1)}
        \right)
        \le
        \bigO\left(mn^{1-q}\right).
        \tag*{\qedhere}
    \end{align*}
\end{proof}

\subsection{Finding close pairs from a coarse attention approximation} \label{subsec:proofs:encoding-closest-pairs-with-attention}
Following the general strategy of \citet{alman2023fast}, we encode a closest pair instance as an attention instance so that approximating the attention output reveals an approximate closest pair. Deviating from their construction, however, we ensure that recovering such a pair does not require a high-precision attention approximation. To achieve this, we shift the attention scores relative to a collection of dummy keys so that the contribution of a close pair dominates the output, while the combined contribution of all far pairs remains negligible. Importantly, the keys and values depend only on one of the two closest pair sets, whereas the queries depend only on the other. The construction is therefore compatible with preprocessing the keys and values before the query vectors are revealed. The following lemma formalizes these properties.

\begin{lemma} \label{lem:gadget}
    Fix \(\varepsilon,c,C_d > 0\).
    There exists \(C_B>0\) such that the following holds for all sufficiently large \(n \in \sN\) and \(N \coloneqq 2n\), \(d \coloneqq 2C_d \log n\), \(D \coloneqq 4C_d \log N\), and \(B \coloneqq C_B \sqrt{\log N}\).
    
    Choose \(t \ge c \log n\) with \(d \ge (1+\varepsilon)t\), a set of vectors \(\sB = \{\vb_1,\ldots,\vb_n\} \subseteq \sM_d\), and a sequence \(\va_1,\ldots,\va_m \in \sM_d\).
    One can construct matrices \(\mK \in [-B,B]^{N \times D}\) and \(\mV \in [0,1]^{N \times D}\) independent of \(\va_1,\ldots,\va_m\) and in \(\bigO(nD)\) time, and another matrix \(\mQ \in [-B,B]^{m \times D}\) in \(\bigO(mD)\) time such that \(\Att(\mQ,\mK,\mV) = \vy \ve_1^\top\) for some \(\vy \in \R^m\) satisfying, for every \(i \in [m]\),
    \begin{align*}
        y_i &\ge \frac{n}{n+1} \qquad \text{if} \quad \min_{j \in [n]} \|\va_i-\vb_j\|_0 \le t, \quad\text{and} \\
        y_i &\le \frac{1}{n+1} \qquad \text{if} \quad \min_{j \in [n]} \|\va_i-\vb_j\|_0 \ge (1+\varepsilon)t.
    \end{align*}
\end{lemma}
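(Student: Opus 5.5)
The plan is to encode Hamming distance on the balanced slice as an inner product, and to compare every genuine key against a block of $n$ identical dummy keys whose common logit sits exactly halfway between the ``close'' and ``far'' regimes. The values then measure the total attention mass on genuine keys.

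For $\vx,\vy\in\sM_d$ we have $\|\vx-\vy\|_0=d-2\langle\vx,\vy\rangle$. Hence $\min_j\|\va_i-\vb_j\|_0\le t$ gives some $j$ with $\langle\va_i,\vb_j\rangle\ge(d-t)/2$. Conversely, $\min_j\|\va_i-\vb_j\|_0\ge(1+\varepsilon)t$ gives $\langle\va_i,\vb_j\rangle\le(d-(1+\varepsilon)t)/2$ for all $j$.

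\textbf{Construction.} Let $\lambda\coloneqq C_\lambda\sqrt{\log N}$, where $C_\lambda$ is to be fixed. Since $D=4C_d\log N\ge 2d$, every row lives in $\R^D$ with at least $d$ spare coordinates, which we fill with zeros.
\begin{itemize}
\item \emph{Queries:} $\vq_i\coloneqq(\lambda\va_i,\vzero)$.
\item \emph{Genuine keys:} $\mK_j\coloneqq(\lambda\vb_j,\vzero)$ for $j\in[n]$.
\item \emph{Dummy keys:} $\mK_{n+j}\coloneqq(\kappa\vone_d,\vzero)$ for $j\in[n]$, with $\kappa\coloneqq\lambda\bigl(1-(1+\varepsilon/2)t/d\bigr)$.
\end{itemize}
The hypothesis $d\ge(1+\varepsilon)t$ gives $\kappa\in[0,\lambda]$. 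Because $\langle\va_i,\vone\rangle=d/2$, every dummy key receives the logit
\[
\tau\coloneqq\frac{\lambda\kappa d}{2D}=\frac{\lambda^2\bigl(d-(1+\varepsilon/2)t\bigr)}{2D},
\]
independently of $\va_i$. Spreading the dummy logit over all $d$ coordinates in this way is what keeps every entry at most $\lambda$. A single large coordinate would instead need magnitude about $\sqrt{\tau D}=\Theta(\log n)$, which is not $\bigO(\sqrt{\log n})$.

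The values are $\mV_j\coloneqq\ve_1^\top$ for genuine keys and $\vzero$ for dummy keys. Then $\Att(\mQ,\mK,\mV)=\vy\ve_1^\top$ with
\[
y_i=\frac{S_i}{S_i+n e^{\tau}},\qquad S_i\coloneqq\sum_{j}e^{\lambda^2\langle\va_i,\vb_j\rangle/D}.
\]
$\mK$ and $\mV$ depend only on $\sB$, $t$ and $\lambda$, and are built in $\bigO(nD)$ time. $\mQ$ is built in $\bigO(mD)$ time. All entries lie in $[-B,B]$ once $C_B\ge C_\lambda$.

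\textbf{Separation.} Set $\Delta\coloneqq\lambda^2\varepsilon t/(4D)$.
\begin{itemize}
\item \emph{Close case:} some genuine logit is at least $\lambda^2(d-t)/(2D)=\tau+\Delta$. So $S_i\ge e^{\tau+\Delta}$, and $y_i\ge n/(n+1)$ holds as soon as $S_i\ge n^2e^\tau$, i.e.\ as soon as $\Delta\ge2\log n$.
\item \emph{Far case:} every genuine logit is at most $\tau-\Delta$. So $S_i\le ne^{\tau-\Delta}$, and $y_i\le 1/(n+1)$ holds as soon as $S_i\le e^\tau$, i.e.\ again as soon as $\Delta\ge\log n$.
\end{itemize}

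\textbf{Choice of constants.} Using $t\ge c\log n$ and $D=4C_d\log N\le 8C_d\log n$ for large $n$, we get $\Delta\ge C_\lambda^2\varepsilon c\log N/(32C_d)$. Since $\log N\ge\log n$, it suffices to take $C_\lambda^2\ge 64C_d/(\varepsilon c)$ to obtain $\Delta\ge 2\log n$. We then take $C_B\coloneqq C_\lambda$.

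The main obstacle I anticipate is exactly this parameter balancing: the gap $\Delta$ must grow like $\log n$ to beat $n$ dummies and $n$ far keys, while all entries stay at $\bigO(\sqrt{\log n})$. This works only because $t=\Omega(\log n)$, which is precisely where the lower threshold $t\ge c\log n$ (handled separately by \Cref{lem:small-t-bcp}) enters.
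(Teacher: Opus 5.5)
Your proof is correct and takes essentially the same approach as the paper. Both encode Hamming distance on the balanced slice through inner products, add $n$ dummy keys with indicator values so that the softmax normalization becomes a threshold test, and use $t\ge c\log n$ to get a logit gap of order $\log n$ while keeping entries of order $\sqrt{\log n}$. The only difference is cosmetic: the paper uses zero dummy keys and shifts the genuine logits down through an appended block ($\vone_d$ in the queries, $-\mu\vone_d$ in the keys, with a $t$-dependent scale $\beta$), whereas you keep a fixed scale $\lambda$ and raise the dummy logit to $\tau$ via $\kappa\vone_d$ and $\langle\va_i,\vone_d\rangle=d/2$, which is equivalent because softmax is invariant under shifting all logits by a constant.
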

\begin{proof}
    Define
    \[
        \rho \coloneqq \sqrt{D/(2d)}
        ,\qquad
        \beta
        \coloneqq
        \frac{12d\log n}{\varepsilon t}
        ,\qquad \text{and} \qquad
        \mu
        \coloneqq
        \frac{1}{2} - \left(\frac{1}{2} + \frac{\varepsilon}{3}\right) \frac{t}{d}
    \]
    and note that
    \[
        0 \le \frac{\varepsilon}{6(1+\varepsilon)} \le \mu \le \frac{1}{2}
    \]
    since \(t/d \le 1/(1+\varepsilon)\) by assumption.
    
    We construct the key and value matrices as
    \[
        \mK
        \coloneqq
        \rho\sqrt{\beta}
        \begin{bmatrix}
            \vb_1^\top & -\mu\vone_d^\top & \vzero_{D-2d}^\top \\
            \vdots & \vdots & \vdots \\
            \vb_n^\top & -\mu\vone_d^\top & \vzero_{D-2d}^\top \\
            \vzero_d^\top & \vzero_d^\top & \vzero_{D-2d}^\top \\
            \vdots & \vdots & \vdots \\
            \vzero_d^\top & \vzero_d^\top & \vzero_{D-2d}^\top
        \end{bmatrix}
        \in \R^{N \times D}
        \qquad \text{and} \qquad
        \mV
        \coloneqq
        \begin{bmatrix}
            1 & 0 & \cdots & 0 \\
            \vdots & \vdots & \ddots & \vdots \\
            1 & 0 & \cdots & 0 \\
            0 & 0 & \cdots & 0 \\
            \vdots & \vdots & \ddots & \vdots \\
            0 & 0 & \cdots & 0 \\
        \end{bmatrix}
        \in \R^{N \times D}
    \]
    and the query matrix as
    \[
        \mQ
        \coloneqq
        \rho\sqrt{\beta}
        \begin{bmatrix}
            \va_1^\top & \vone_d^\top & \vzero_{D-2d}^\top \\
            \vdots & \vdots & \vdots \\
            \va_m^\top & \vone_d^\top & \vzero_{D-2d}^\top \\
        \end{bmatrix}
        \in \R^{m \times D}.
    \]
    
    Next, we choose \(C_B\) so that the constructed matrices satisfy the required entry bounds.
    Since \(t \ge c\log n\), \(d=2C_d\log n\), and \(\rho^2=\log(2n)/\log n\le 2\),
    \[
        \rho^2\beta
        =
        \frac{\log N}{\log n} \frac{12 d \log n}{\varepsilon t}
        \le
        \frac{24C_d \log N}{\varepsilon c}
    \]
    Hence, there exists a constant \(C_B>0\), depending only on \(C_d\), \(\varepsilon\), and \(c\), such that \(\rho\sqrt{\beta} \le C_B\sqrt{\log N}\).
    Since \(0\le\mu\le 1/2\), it follows that \(\|\mQ\|_{\ell_\infty}\le B\) and \(\|\mK\|_{\ell_\infty}\le B\).

    We now show that the attention output separates close from far pairs as claimed.
    To that end, let \(\mY \coloneqq \Att(\mQ,\mK,\mV)\).
    For \(i \in [m]\) and \(j\in[n]\), the logit between the \(i\)-th query and the \(j\)-th non-dummy key is
    \begin{align}
        \frac{\langle \mQ_i,\mK_j\rangle}{D}
        &=
        \frac{\rho^2\beta}{D}
        \left(\langle a_i,b_j\rangle-\mu d\right) \notag
        =
        \frac{\beta}{2d}
        \left(\langle a_i,b_j\rangle-\mu d\right) \notag
        \\&=
        \frac{\beta}{2d} \left( \frac{\|a_i\|_2^2 + \|b_j\|_2^2 - \|a_i-b_j\|_2^2}{2} - \mu d \right) \notag
        \\&=
        \frac{\beta}{2d} \left( \frac{d-\|a_i-b_j\|_0}{2} - \mu d \right) \notag
        \\&=
        \frac{\beta}{4} \left( 1-\frac{\|a_i-b_j\|_0}{d} \right) - \frac{\beta\mu}{2}, \label{eq:gadget:logit}
    \end{align}
    where the third equality uses that \(a_i\) and \(b_j\) both have Hamming weight \(d/2\).
    For \(i\in[m]\), let
    \begin{equation}
        s_i
        \coloneqq
        \sum_{j=1}^n
        \exp\left(
            \frac{\langle \mQ_i,\mK_j\rangle}{D}
        \right)
    \end{equation}
    be the total score assigned to   non-dummy keys.
    By the definition of \(\mV\), the \(i\)-th attention output is
    \begin{equation} \label{eq:reduction:output}
        \mY_i = \frac{s_i}{s_i+n}\ve_1^\top,
    \end{equation}
    since the \(n\) dummy keys are \(0\) so that each of them has logit \(0\) and contributes score \(1\) and only the first column of \(\mV\) is nonzero.
    Now, fix \(i \in [m]\) and distinguish the following two cases from the claim.
    Suppose first that \(\min_{j\in[n]}\|a_i-b_j\|_0 \le t\).
    Then, \Cref{eq:gadget:logit} and the definition of \(\mu\) imply
    \[
        \frac{\langle \mQ_i,\mK_j \rangle}{D}
        \ge
        \frac{\beta}{4} \left(1-\frac{t}{d}\right) - \frac{\beta\mu}{2}
        =
        \frac{\beta \varepsilon t}{6d}
        =
        2\log(n)
    \]
    for some \(j \in [n]\).
    Consequently, \(s_i \ge n^2\) and \Cref{eq:reduction:output} gives \(Y_{i,1} \ge n/(n+1)\).
    Now, suppose instead that \(\min_{j\in[n]}\|a_i-b_j\|_0\ge(1+\varepsilon)t\).
    Then, \Cref{eq:gadget:logit} and the definition of \(\mu\) imply
    \[
        \frac{\langle \mQ_i,\mK_j\rangle}{D}
        \le
        \frac{\beta}{4} \left( 1 - \frac{(1 + \varepsilon)t}{d} \right) - \frac{\beta\mu}{2}
        =
        -\frac{\beta \varepsilon t}{12d}
        =
        -\log n
    \]
    for all \(j \in [n]\).
    Consequently, \(s_i\le 1\), and \Cref{eq:reduction:output} gives \(Y_{i,1} \le 1/(n+1)\) for all \(i \in [m]\).

    Finally, \(\mK\) and \(\mV\) can be constructed in \(\bigO(ND)\) time and \(\mQ\) in \(\bigO(mD)\) time by copying and padding the respective inputs.
    That proves the claim.
\end{proof}

\newpage
\subsection[Proofs of Theorems~\ref*{thm:constant-error-hardness} and \ref*{thm:preprocessing-hardness}]{Proofs of \Cref{thm:constant-error-hardness,thm:preprocessing-hardness}} \label{subsec:proofs:output-approximation}
\Cref{thm:constant-error-hardness,thm:preprocessing-hardness} can now be proved by using \Cref{lem:gadget} to encode an input to the closest pair problem from \Cref{def:preprocessed-gap-bcp} as an attention input whose output gives an almost-binary indicator of whether an approximate close pair exists.
Consequently, a fast algorithm with even a coarse approximation would give a fast algorithm for deciding the closest pair problem, which would contradict \Cref{pro:preprocessed-gap-bcp-hardness}.
While the proof is essentially the same for both theorems, the underlying problem formulations differ slightly.
For compactness, we therefore formulate below a generalized problem formulation including both \Cref{prb:approximate-attention,prb:preprocessed-attention} as special cases.
We then prove that this generalized problem cannot be solved much more efficiently than by the trivial algorithm for attention, from which \Cref{thm:constant-error-hardness,thm:preprocessing-hardness} follow immediately.

\begin{boxedproblem}{prb:preprocessed-batch-attention}{Approximating attention with preprocessing (batch variant)}
    Fix parameters \(n,m,d \in \sN\) and \(B\ge0\).
    Given matrices \(\mK \in [-B,B]^{n \times d}\) and \(\mV \in [0,1]^{n \times d}\), preprocess them.
    Then, given a query \(\mQ \in [-B,B]^{m \times d}\), the task is to compute a matrix \(\mT \in \sR^{m \times d}\).
    For this output and the exact target \(\mY \coloneqq \Att(\mQ,\mK,\mV)\), the problem
    \begin{align*}
        \AbsPreBatchAtt(n,m,d,B,\eta_\mathrm{abs})
        &\qquad\text{requires}\qquad
        \|\mT - \mY\|_{\ell_\infty} \le \eta_\mathrm{abs},\quad\text{and}\\
        \RelPreBatchAtt_p(n,m,d,B,\eta_\mathrm{rel})
        &\qquad\text{requires}\qquad
        \|\mT - \mY\|_{\Schatten_p} \le  \eta_\mathrm{rel}\|\mY\|_{\Schatten_p}.
    \end{align*}
\end{boxedproblem}

\begin{proposition} \label{pro:preprocessed-batched-attention-hardness}
    Fix \(\eta_\mathrm{abs} \in [0,1/2)\), \(\eta_\mathrm{rel} \in [0,1)\) and \(k,q>0\).
    Suppose that \(\seth\) holds.
    Then, there exist \(C_d,C_B,\varepsilon>0\) such that neither \(\AbsPreBatchAtt(n,m(n),d,B,\eta_\mathrm{abs})\) nor \(\RelPreBatchAtt_p(n,m(n),d,B,\eta_\mathrm{rel})\) can be solved with \(\bigO(n^k)\) preprocessing time and \(\bigO(m(n)n^{1-q})\) query time, for \(d \coloneqq C\log n\), \(B \coloneqq C_B\sqrt{\log n}\), any \(p \in [1,\infty]\), and any function \(m: \sN\to\sN\) with \(m(n) \in [2n]\).
\end{proposition}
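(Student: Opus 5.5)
We argue by contradiction and reduce from \Cref{pro:preprocessed-gap-bcp-hardness}. Fix \(k,q\) and let \(C,\varepsilon>0\) be the constants that proposition provides for \(\PreGapBCP(n,m'(n),C\log n,\varepsilon)\). Here \(m'\) is a batch-size function still to be chosen. We set \(C_d\) so that the gadget dimension matches \(d=C\log n\), i.e. \(2C_d\log n=C\log n\). \Cref{lem:small-t-bcp} then supplies a constant \(c>0\) such that thresholds \(t\le c\log n\) are solvable directly within the allowed budget. Feeding \(\varepsilon,c,C_d\) into \Cref{lem:gadget} gives \(C_B\), and the attention instances it produces have length \(N=2n\), dimension \(D=4C_d\log N\) and entry bound \(B=C_B\sqrt{\log N}\). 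These are exactly the parameters the proposition speaks about, with \(n\) renamed to \(N\). We also need \(t\le d/(1+\varepsilon)\); if \(t\) is larger, the ``far'' case of the promise is impossible and we simply accept.

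Now suppose an algorithm solves \(\AbsPreBatchAtt(N,m(N),D,B,\eta_\mathrm{abs})\) or \(\RelPreBatchAtt_p(N,m(N),D,B,\eta_\mathrm{rel})\) with \(\bigO(N^k)\) preprocessing and \(\bigO(m(N)N^{1-q})\) query time. We build a \(\PreGapBCP\) solver as follows.
\begin{enumerate}
\item \emph{Preprocessing.} Given \(t\) and \(\sB\), if \(t\le c\log n\) we run the trie preprocessing of \Cref{lem:small-t-bcp}. Otherwise we form \(\mK,\mV\) via \Cref{lem:gadget} and run the assumed preprocessing on them. This costs \(\bigO(nD+N^k)=\bigO(n^k)\).
\item \emph{Queries.} Given \(\va_1,\ldots,\va_{m'}\), we build \(\mQ\) in \(\bigO(m'D)\) time, padding it to \(m(N)\) rows by duplicating rows if needed. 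With \(m'(n)\coloneqq m(2n)\) capped to \([2n]\), padding costs at most a constant factor. We then run the assumed query algorithm to obtain \(\mT\).
\end{enumerate}
The query time is \(\bigO(m N^{1-q})=\bigO(m' n^{1-q})\), which contradicts \Cref{pro:preprocessed-gap-bcp-hardness} once decoding is correct.

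Decoding in the additive case is immediate. By \Cref{lem:gadget}, \(\mY=\vy\ve_1^\top\) with every \(y_i\) either \(\ge n/(n+1)\) or \(\le 1/(n+1)\). For large \(n\) these two ranges are more than \(2\eta_\mathrm{abs}\) apart. We therefore accept iff some \(T_{i,1}\ge 1/2\). If a close pair exists, some \(y_i\ge n/(n+1)\) forces \(T_{i,1}>1/2\). If all pairs are far, every \(T_{i,1}\le 1/(n+1)+\eta_\mathrm{abs}<1/2\).

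The relative Schatten case is the main obstacle, because a rank-one \(\mY\) permits arbitrary spreading of error across entries. We use \(\|\vy\ve_1^\top\|_{\Schatten_p}=\|\vy\|_2\) for every \(p\), together with the triangle inequality in \(\Schatten_p\). This gives
\[
\|\mT\|_{\Schatten_p}\in\bigl[(1-\eta_\mathrm{rel})\|\vy\|_2,\ (1+\eta_\mathrm{rel})\|\vy\|_2\bigr].
\]
We accept iff \(\|\mT\|_{\Schatten_p}\ge \tau\) for a threshold \(\tau\) between the two cases. In the close case \(\|\vy\|_2\ge n/(n+1)\), so \(\|\mT\|_{\Schatten_p}\ge(1-\eta_\mathrm{rel})n/(n+1)\). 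In the far case \(\|\vy\|_2\le\sqrt{m}/(n+1)\le 2n/(n+1)\cdot m^{-1/2}\). This bound is not small when \(m\) is large, and this is where care is needed.

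To fix this, we make the gadget's far bound stronger. We rescale \(\beta\) by a constant factor, which only changes \(C_B\), so that far logits are \(\le -3\log n\) and close logits stay \(\ge 2\log n\). Then far \(y_i\le n^{-2}\) and \(\|\vy\|_2\le \sqrt{m}\,n^{-2}=\bigO(n^{-3/2})\). Meanwhile the close case still gives \(\|\vy\|_2\ge 1-1/n\). Taking \(\tau\coloneqq(1-\eta_\mathrm{rel})/2\) separates the two cases for large \(n\), since \((1+\eta_\mathrm{rel})\bigO(n^{-3/2})<\tau\).

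Computing \(\|\mT\|_{\Schatten_p}\) exactly could be costly. To avoid this, we zero out all columns of \(\mT\) except the first before testing. This projection cannot increase the error, because \(\mY\) lies in that column and projection does not increase any Schatten norm. The Schatten norm of a single column is just its \(\ell_2\) norm, computable in \(\bigO(m)\) time. Randomized bounded-error solvers transfer directly, since the proposition already allows them.
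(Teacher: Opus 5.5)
Your proof is correct and follows the paper's route: you reduce from \(\PreGapBCP\) via \Cref{lem:gadget}, dispatch small thresholds with \Cref{lem:small-t-bcp} and large ones trivially, then threshold \(T_{i,1}\) at \(1/2\) in the additive case and use \(\|\vy\ve_1^\top\|_{\Schatten_p}=\|\vy\|_2\) with the triangle inequality in the relative case. Your rescaling of \(\beta\) is unnecessary, because the unmodified far bound \(\|\vy\|_2\le\sqrt{m}/(n+1)\le 2/\sqrt{n}\) already vanishes for \(m\le 4n\), so your remark that it is ``not small when \(m\) is large'' is mistaken; on the other hand, your padding of \(\mQ\) to \(m(N)\) rows handles the \(N=2n\) bookkeeping that the paper glosses over, and your projection onto the first column replaces the paper's \(\bigO(mD^2)\) SVD-based norm computation with an \(\bigO(m)\) one.
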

\begin{proof}
    Define shorthand notation \(N \coloneqq 2n\) and \(m \coloneqq m(n)\).
    Assume w.l.o.g. that \(k>1\), \(q\in(0,1/2)\), and that \(n\) is sufficiently large.
    We choose \(C_d,\varepsilon>0\) as the constant from \Cref{pro:preprocessed-gap-bcp-hardness} so that \(\PreGapBCP(n,m,d,\varepsilon)\) cannot be solved with \(\bigO(n^k)\) preprocessing time and \(\bigO(mn^{1-q})\) query time for \(d\coloneqq \frac{C_d}{2}\log n\).
    Suppose, for contradiction, that either \(\AbsPreBatchAtt(n,m,D,B,\eta_\mathrm{abs})\) or \(\RelPreBatchAtt_p(n,m,D,B,\eta_\mathrm{rel})\) can be solved with \(\bigO(n^k)\) preprocessing time and \(\bigO(mn^{1-q})\) decoding time.

    We construct an algorithm for \(\PreGapBCP(n,m(n),d,\varepsilon)\) with \(\bigO(n^k)\) preprocessing time and \(\bigO(mn^{1-q})\) query time, contradicting \Cref{pro:preprocessed-gap-bcp-hardness}.
    To that end, let a threshold \(t \in [0,d]\) and a set \(\sB=\{\vb_1,\ldots,\vb_n\} \subseteq \sM_d\) be an input to \(\PreGapBCP(n,m,d,\varepsilon)\), together with queries \(\va_1,\ldots,\va_m \in \sM_d\) revealed after preprocessing.
    Applying \Cref{lem:small-t-bcp} to \(C_d\) and \(q\) gives a constant \(c\in(0,C_d/2)\) so that we can decide the \(\PreGapBCP\) instance with \(\bigO(nd)\le\bigO(n^k)\) preprocessing time and \(\bigO(mn^{1-q})\) decoding time if \(t\le c\log n\).
    We may there assume that \(t>c\log n\).
    We may further assume that \((1+\varepsilon)t\le d\).
    Indeed, if \((1 + \varepsilon) t > d\), then every conceivable pair \((\va,\vb) \in \{0,1\}^d \times \sB\) has Hamming distance at most \(d < (1 + \varepsilon) t\), so the problem is trivial to decide.

    We construct a KV cache \((\mK,\mV)\) and a query \(\mQ\) such that the \(\PreGapBCP\) instance can be decided from an attention approximation after KV cache preprocessing.
    After seeing only \(t\) and \(\sB\), we apply \Cref{lem:gadget} with parameters \(\varepsilon\), \(c\) and choose \(t\) as the threshold and \(\sB\) as the set.
    This supplies the constant \(C_B>0\) and allows us to construct, during preprocessing and in \(\bigO(ND) = \bigO(n \log n) \le \bigO(n^k)\) time, matrices \(\mK \in [-B,B]^{N \times D}\) and \(\mV \in [0,1]^{N \times D}\), which we preprocess with the hypothetical preprocessing algorithm for attention in \(\bigO(n^k)\) time.

    After that, let the vectors \(\va_1,\ldots,\va_m\) be revealed.
    Now, \Cref{lem:gadget} allows us to construct, in \(\bigO(mD) = \bigO(m\log n) \le \bigO(mn^{1-q})\) time, a matrix \(\mQ \in [-B,B]^{m \times D}\) such that \(\mY \coloneqq \Att(\mQ,\mK,\mV) = \vy\ve_1^\top\) for some \(\vy \in \R^m\) with
    \begin{equation} \label{eq:preprocessed-reduction:yes}
        y_i
        \ge
        \frac{n}{n+1}
        \qquad
        \text{if } \min_{j\in[n]}\|\va_i-\vb_j\|_0 \le t
    \end{equation}
    and
    \begin{equation} \label{eq:preprocessed-reduction:no}
        y_i
        \le
        \frac{1}{n+1}
        \qquad
        \text{if } \min_{j\in[n]}\|\va_i-\vb_j\|_0 \ge (1+\varepsilon)t
    \end{equation}
    for all \(i \in [m]\).
    We use that to decide the \(\PreGapBCP\) instance.
    To that end, run the assumed approximate attention decoding algorithm on \(\mQ\) and let \(\mT \in \R^{m \times D}\) be its output.
    
    Suppose first that the algorithm guarantees \(\|\mT - \mY\|_{\ell_\infty} \le \eta_\mathrm{abs}\).
    If \(\min_{i\in[m],j\in[n]} \|\va_i-\vb_j\|_0 \le t\), then
    \[
        T_{i,1}
        \ge
        Y_{i,1} - \eta_\mathrm{abs}
        \overset{\eqref{eq:preprocessed-reduction:yes}}{\ge}
        \frac{n}{n+1} - \eta_\mathrm{abs}
        >
        \frac{1}{2}
    \]
    for some \(i \in [m]\).
    If, instead, \(\min_{i\in[m],j\in[n]} \|\va_i-\vb_j\|_0 \ge (1+\varepsilon)t\), then
    \[
        T_{i,1}
        \le
        Y_{i,1} + \eta_\mathrm{abs}
        \overset{\eqref{eq:preprocessed-reduction:no}}{\le}
        \frac{1}{n+1} + \eta_\mathrm{abs}
        <
        \frac{1}{2}
    \]
    for all \(i \in [m]\).
    Thus, we can decide the \(\PreGapBCP\) instance by checking if any entry of \(\mT\) exceeds \(1/2\).

    Suppose now that, instead, the algorithm guarantees \(\|\mT-\mY\|_{\Schatten_p} \le \eta_\mathrm{rel} \|\mY\|_{\Schatten_p}\).
    Since \(\mY=\vy\ve_1^\top\), \(\|\mY\|_{\Schatten_p} = \|\vy\|_2 \le \sqrt{m}\).
    If \(\min_{i\in[m],j\in[n]} \|\va_i-\vb_j\|_0 \le t\), then
    \[
        \|\mT\|_{\Schatten_p}
        \ge
        \|\mY\|_{\Schatten_p}
        -
        \|\mT-\mY\|_{\Schatten_p}
        \ge
        (1-\eta_\mathrm{rel}) \|\mY\|_{\Schatten_p}
        =
        (1-\eta_\mathrm{rel}) \|\vy\|_2
        \overset{\eqref{eq:preprocessed-reduction:yes}}{\ge}
        (1-\eta_\mathrm{rel})\frac{n}{n+1}.
    \]
    for some \(i \in [m]\).
    If, instead, \(\min_{i\in[m],j\in[n]} \|\va_i-\vb_j\|_0 \ge (1+\varepsilon)t\), then
    \[
        \|\mT\|_{\Schatten_p}
        \le
        \|\mY\|_{\Schatten_p}
        +
        \|\mT-\mY\|_{\Schatten_p}
        \le
        (1+\eta_\mathrm{rel}) \|\mY\|_{\Schatten_p}
        =
        (1+\eta_\mathrm{rel}) \|\vy\|_2
        \overset{\eqref{eq:preprocessed-reduction:no}}{\le}
        (1+\eta_\mathrm{rel})\frac{\sqrt{m}}{n+1}.
    \]
    Thus, we can decide the \(\PreGapBCP\) instance by checking if \(\|\mT\|_{\Schatten_p} > (1-\eta_\mathrm{rel})\frac{n}{n+1}\).
    Computing \(\|\mT\|_{\Schatten_p}\) takes \(\bigO(mD^2)=\bigO(m\log^2n)\le\bigO(mn^{1-q})\) time.
    Either guarantee gives an algorithm for deciding the \(\PreGapBCP\) instance from \(\mT\) in \(\bigO(D) = \bigO(\log n) \le \bigO(n^{1-q})\) time.
    Computing \(\mT\), by assumption, only requires \(\bigO(n^k)\) preprocessing time and \(\bigO(mn^{1-q})\) decoding time.
    That contradicts \Cref{pro:preprocessed-gap-bcp-hardness} and thereby proves the claim.
\end{proof}

\getkeytheorem{thm:constant-error-hardness}
\begin{proof}
    This follows directly from \Cref{pro:preprocessed-batched-attention-hardness} with \(m(n) \coloneqq n\).
\end{proof}

\getkeytheorem{thm:preprocessing-hardness}
\begin{proof}
    This follows directly from \Cref{pro:preprocessed-batched-attention-hardness} with \(m(n) \coloneqq 1\).
\end{proof}

\subsection{Finding close pairs from a sparse attention approximation} \label{subsec:proofs:encoding-closest-pairs-with-sparse-attention}
We next show how a sparse attention algorithm can be used to identify approximate closest pairs. As before, we encode distances between a query vector and the preprocessed set as attention scores, with closer vectors receiving larger scores. We would then like to recover an approximate closest pair from any small set of keys carrying a constant fraction of the attention mass. For this argument to apply, however, the resulting attention instance must satisfy the sparsity promise of \Cref{prb:sparse-attention}. But a closest pair instance may contain many approximately closest vectors, causing the corresponding keys to receive similar scores and the attention mass to be spread among them. We overcome this by randomly perturbing the scores so that, with sufficiently high probability, the score corresponding to a close pair is amplified enough to dominate the attention. Exponential random variables provide exactly this effect. At the same time, the random perturbation must not allow a far pair to dominate by chance. We therefore truncate the exponential variables, limiting how much any score can be amplified while still allowing a close pair to separate from the rest. The following lemma shows that these truncated random variables indeed still create the required separation. \Cref{lem:sparse-gadget} then translates this separation into an attention instance in which, with high probability, an approximate closest pair carries almost all of the attention mass and can therefore be identified by a retrieval algorithm for sparse attention.

\begin{lemma} \label{lem:exponential-separation}
    Fix \(L>0\), \(n\ge2\), and \(z_1,\ldots,z_n\in\R\).
    Sample exponential random variables \(E_1,\ldots,E_n \overset{\mathrm{iid}}{\sim} \Exp(\theta)\) of rate \(\theta>0\) and clip them to \(\widehat E_j \coloneqq \min\{E_j,L\}\).
    Then, for any \(g>0\),
    \[
        \Pr\Bigl[ \exists j^\star\in[n] : \forall j^\star \neq j\in[n] : (s_{j^*} + \widehat E_{j^\star}) - (s_j + \widehat E_j) \ge g \Bigr]
        \ge e^{-\theta g} - ne^{-\theta L}.
    \]
\end{lemma}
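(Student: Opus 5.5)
We will prove the bound by identifying a concrete event that forces the required separation. (The statement writes \(s_j\) for the given reals, which we identify with \(z_j\).) Let \(M\coloneqq\max_{j}(z_j+E_j)\) be computed with the \emph{unclipped} exponentials, and let \(j^\star\) be an index attaining it. We will show two things. First, with probability at least \(e^{-\theta g}\), the maximum of \(z_j+E_j\) exceeds the second largest value by at least \(g\). Second, on the event \(\{E_j< L\ \forall j\}\) clipping changes nothing, so \(\widehat E_j=E_j\) and the gap transfers to the clipped variables. A union bound then gives
\[
\Pr[\text{gap}\ge g \text{ with clipping}] \ge \Pr[\text{gap}\ge g]-\Pr[\exists j: E_j\ge L]\ge e^{-\theta g}-ne^{-\theta L},
\]
since \(\Pr[E_j\ge L]=e^{-\theta L}\).

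The core step is the unclipped gap bound. We will use memorylessness. For each \(j\), condition on all \(E_{j'}\) with \(j'\neq j\), and set \(R_j\coloneqq\max_{j'\ne j}(z_{j'}+E_{j'})\). The event that \(j\) is the winner with gap at least \(g\) is \(\{z_j+E_j\ge R_j+g\}\). Its conditional probability is
\[
\Pr[E_j\ge R_j+g-z_j]=\min\{1,e^{-\theta(R_j+g-z_j)}\}\ge e^{-\theta g}\,\Pr[E_j\ge R_j-z_j\mid R_j],
\]
which holds in both cases of the minimum. Taking expectations gives
\[
\Pr[j\text{ wins by }\ge g]\ge e^{-\theta g}\Pr[j\text{ wins}].
\]
The events "\(j\) wins" for different \(j\) partition the probability space up to ties, which have probability zero because the \(E_j\) are continuous. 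The events "\(j\) wins by \(\ge g\)" are disjoint for \(g>0\). Summing over \(j\) therefore yields probability at least \(e^{-\theta g}\).

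The only delicate point is the inequality \(\min\{1,e^{-\theta(x+g)}\}\ge e^{-\theta g}\min\{1,e^{-\theta x}\}\). When \(x\ge0\) the two sides are equal. When \(x<0\) the left side is at least \(e^{-\theta g}\) whenever \(x+g\ge0\), and equals \(1\) otherwise; in both subcases it dominates \(e^{-\theta g}\). This settles the step, and everything else is routine. We expect this conditioning and memorylessness argument to be the main obstacle, and the rest of the proof (the transfer to clipped variables and the union bound) to be bookkeeping.
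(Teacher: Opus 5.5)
Your proof is correct and follows the same route as the paper. You condition on the other exponentials, use memorylessness to get a factor \(e^{-\theta g}\) relative to the probability that \(j\) wins, sum over the almost surely unique winner, and pass to the clipped variables via a union bound over the events \(\{E_j\ge L\}\). Your case check of \(\min\{1,e^{-\theta(x+g)}\}\ge e^{-\theta g}\min\{1,e^{-\theta x}\}\) is the same inequality the paper writes with \(\max\{0,\cdot\}\) in the exponent.
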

\begin{proof}
    Define the scores \(\widehat Z_j \coloneqq s_j + \widehat E_j\) and their corresponding values before clipping \(Z_j \coloneqq s_j + E_j\).
    Fix an arbitrary \(j^\star\in[n]\) and condition on \(E_{-j^\star}\), that is, all \(E_j\) except \(E_{j^\star}\).
    Among these, let
    \[
        M_{j^\star} \coloneqq \max_{j\in[n], \ j\neq j^\star} Z_j
    \]
    be the maximum score.
    Then,
    \begin{align*}
        \Pr\bigl[
            Z_{j^\star} \ge M_{j^\star} + g \mid E_{-j^\star}
        \bigr]
        &=
        e^{-\theta\max\{0,M_{j^\star}+g-s_{j^\star}\}}
        \\&\ge
        e^{-\theta g}e^{-\theta \max\{0,M_{j^\star}-s_{j^\star}\}}
        =
        e^{-\theta g} \Pr\bigl[
            Z_{j^\star} \ge M_{j^\star} \mid E_{-j^\star}
        \bigr].
    \end{align*}
    Taking expectations over \(E_{-j^\star}\),
    \[
        \Pr\bigl[
            Z_{j^\star} \ge M_{j^\star} + g
        \bigr]
        \ge
        e^{-\theta g} \Pr\bigl[
            Z_{j^\star} \ge M_{j^\star}
        \bigr].
    \]
    Summing over all possible choices of \(j^\star\) and using the fact that the continuous random variables \(s_j+E_j\) have a unique maximum almost surely,
    \begin{align*}
        \Pr\Bigl[
            \exists j^\star\in[n] : \forall j^\star \neq j\in[n] :
            Z_{j^\star} - Z_j \ge g
        \Bigr]
        &=
        \sum_{j^\star\in[n]} \Pr\bigl[
            Z_{j^\star} \ge M_{j^\star} + g
        \bigr]
        \\&\ge
        e^{-\theta g} \sum_{j^\star\in[n]} \Pr\bigl[
            Z_{j^\star} \ge M_{j^\star}
        \bigr]
        \ge
        e^{-\theta g}.
    \end{align*}
    Finally, it may happen that some of the exponential random variables are clipped. Therefore, working on the even that no clipping occurs and using the union bound,
    \begin{align*}
        &\phantom{\ge} \Pr\Bigl[
            \exists j^\star\in[n] : \forall j^\star \neq j\in[n] :
            \widehat Z_{j^\star} - \widehat Z_j \ge g
        \Bigr]
        \\&\ge
        \Pr\Bigl[
            \exists j^\star\in[n] : \forall j^\star \neq j\in[n] :
            Z_{j^\star} - Z_j \ge g
        \Bigr]
        -
        \Pr\Bigl[
            \exists j\in[n] : E_j>L
        \Bigr]
        \\&\ge
        e^{-\theta g} - ne^{-\theta L}.
        \tag*{\qedhere}
    \end{align*}
\end{proof}

\begin{lemma} \label{lem:sparse-gadget}
    Fix \(\varepsilon,c,C_d > 0\) and \(\gamma\in(0,1)\).
    There exists \(C_B>0\) such that the following holds for all sufficiently large \(n \in \sN\), \(d \coloneqq C_d \log n\), and \(B \coloneqq C_B \sqrt{\log n}\).
    
    Choose \(t \ge c \log n\) with \(d \ge (1+\varepsilon)t\), a set of vectors \(\sB = \{\vb_1,\ldots,\vb_n\} \subseteq \sM_d\), and a query \(\va\in\sM_d\).
    One can sample matrices \(\mK\in[-B,B]^{n\times2d}\) independent of \(\va\) and in \(\bigO(nd)\) time, and construct a vector \(\vq\in[-B,B]^{2d}\) in \(\bigO(d)\) time such that
    \[
        \Pr\Bigl[
            \exists j^\star\in[n]:
            \|\va-\vb_{j^\star}\|_0 \le (1+\varepsilon)t
            \;\text{ and }\;
            \softmax_{j^\star}\bigl(\mK\vq/(2d)\bigr)>1-\frac{1}{n}
        \Bigr]
        \ge
        \frac{1}{2} n^{-\gamma}
    \]
    if there exists \(\vb^\star\in\sB\) with \(\|\va-\vb^\star\| \le t\).
\end{lemma}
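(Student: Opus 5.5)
The plan is to adapt the construction of \Cref{lem:gadget}, with one change: the second block of $d$ coordinates no longer holds a constant threshold shift. Instead it holds a per-key random perturbation $\widehat E_j$, and \Cref{lem:exponential-separation} makes one key dominate. Dummy keys are no longer needed, because \Cref{prb:sparse-attention} only asks about the softmax distribution.

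\textbf{Construction.} Fix a scale $\beta>0$ to be chosen below, of order $d\log n/(\varepsilon t)=\bigO(\log n)$. Sample $E_1,\ldots,E_n\overset{\mathrm{iid}}{\sim}\Exp(\theta)$ and set $\widehat E_j\coloneqq\min\{E_j,L\}$. Define
\[
\vq\coloneqq\sqrt{\beta}\,(\va,\vone_d)
\qquad\text{and}\qquad
\mK_j\coloneqq\bigl(\sqrt{\beta}\,\vb_j,\;\tfrac{4}{\sqrt{\beta}}\widehat E_j\vone_d\bigr).
\]
The keys depend only on $\sB$ and the randomness, and both objects can be built in the stated times. As in \Cref{eq:gadget:logit}, using that $\va$ and $\vb_j$ have Hamming weight $d/2$, the logit is
\[
\frac{\langle \mK_j,\vq\rangle}{2d}=\frac{\beta}{8}\Bigl(1-\frac{\|\va-\vb_j\|_0}{d}\Bigr)+2\widehat E_j .
\]
The perturbation coefficient is an arbitrary constant, so I simply call the perturbed score $Z_j\coloneqq s_j+\widehat E_j$ up to rescaling, with $s_j$ the distance term. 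The constant term $\beta/8$ is common to all keys and cancels in the softmax.

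\textbf{Parameter choices.} I would pick the constants in this order.
\begin{itemize}
\item Take the gap $g\coloneqq 3\log n$ in logit units. If one key leads every other key by $g$, the other $n-1$ keys carry total mass at most $n\cdot n^{-3}$ relative to it, so the leader's softmax exceeds $1-1/n$.
\item Take $\theta\coloneqq\gamma/3$ (in the same units), so that $e^{-\theta g}=n^{-\gamma}$.
\item Take $L\coloneqq\theta^{-1}\bigl((1+\gamma)\log n+\log 2\bigr)=\bigO(\log n)$, so that $ne^{-\theta L}\le\tfrac12 n^{-\gamma}$.
\end{itemize}
\Cref{lem:exponential-separation} then gives, with probability at least $\tfrac12 n^{-\gamma}$, a key $j^\star$ whose score exceeds every other score by $g$, and hence $\softmax_{j^\star}>1-1/n$.

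\textbf{Closeness of the leader.} It remains to show that $j^\star$ is $(1+\varepsilon)t$-close to $\va$. Let $\vb^\star$ be the promised key with $\|\va-\vb^\star\|_0\le t$. Any key $\vb_j$ with $\|\va-\vb_j\|_0>(1+\varepsilon)t$ has distance score at least $\beta\varepsilon t/(8d)$ below that of $\vb^\star$. Its perturbation can gain at most $L$ (times the fixed coefficient), while $\vb^\star$'s perturbation is nonnegative. So if I choose $\beta\coloneqq C\,dL/(\varepsilon t)$ with $C$ a large constant, then $\beta\varepsilon t/(8d)$ exceeds the maximal perturbation gain. Every far key then scores strictly below $\vb^\star$ and cannot be the leader. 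Hence the leader satisfies $\|\va-\vb_{j^\star}\|_0\le(1+\varepsilon)t$.

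\textbf{Entry bounds.} This is the step I expect to be the main obstacle: all entries must fit in $[-B,B]$ with $B=\bigO(\sqrt{\log n})$. Since $t\ge c\log n$ and $L=\bigO(\log n)$, we get $\beta=\bigO(\log n)$, so the entries $\sqrt\beta$ of $\vq$ and of the first key block are $\bigO(\sqrt{\log n})$. The perturbation entries are at most $4L/\sqrt\beta$. This is $\bigO(\sqrt{\log n})$ only because $\beta\gtrsim L$, which holds since $d/(\varepsilon t)\ge 1$. The argument therefore relies on the clipping level $L$ and the distance scale $\beta$ both being $\Theta(\log n)$, and I would check this balance carefully, together with the logit normalization by $2d$, to fix $C_B$ depending only on $\varepsilon,c,C_d,\gamma$.
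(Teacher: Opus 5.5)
Your proposal is correct and follows essentially the same route as the paper. The shared ingredients are: keys $\sqrt{\beta}\,\vb_j$ padded with a clipped-exponential perturbation block against the query $\sqrt{\beta}(\va,\vone_d)$; \Cref{lem:exponential-separation} to get one key with a logarithmic lead; the argument that $\beta \gtrsim dL/(\varepsilon t)$ prevents any far key from being the maximizer; and the observation $\beta\gtrsim L$ for the entry bounds. The paper uses different constants ($\theta=\gamma/2$, $L=R\log n$ with $R=3/\theta$, $g=2\log n$, $\beta=4d(R+1)\log n/(\varepsilon t)$, perturbation coefficient $1$). There is also a harmless factor-two slip in your logit: with your key scaling it equals $\frac{\beta}{4}(1-\|\va-\vb_j\|_0/d)+2\widehat E_j$. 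This does not matter once $C$ is taken large and the rate, clipping level and gap are measured consistently in one set of units.
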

\begin{proof}
    Define
    \[
        \theta \coloneqq \frac{\gamma}{2},\qquad
        R \coloneqq \frac{3}{\theta},\qquad\text{and}\qquad
        \beta \coloneqq \frac{4d(R+1)\log n}{\varepsilon t}.
    \]
    We first sample clipped exponential random variables
    \[
        \widehat E_j\coloneqq\min\{E_j,R\log n\},
        \qquad\text{where}\quad
        E_1,\ldots,E_n \overset{\mathrm{iid}}{\sim} \Exp(\theta).
    \]
    We then construct the key matrix and query vector as
    \[
        \mK\coloneqq\sqrt{\beta} \begin{bmatrix}
            \vb_1^\top & \frac{2\widehat E_1}{\beta}\vone_d^\top \\
            \vdots & \vdots \\
            \vb_n^\top & \frac{2\widehat E_n}{\beta}\vone_d^\top \\
        \end{bmatrix}
        \in\R^{n\times2d}
        \qquad\text{and}\qquad
        \vq\coloneqq\sqrt{\beta} \begin{bmatrix}
            \va^\top & \vone_d^\top
        \end{bmatrix}
        \in\R^{2d}.
    \]
    Next, choose \(C_B\) so that the constructed matrices satisfy the required entry bounds.
    Since \(d=C_d\log n\) and \(t \ge c\log n\),
    \[
        \beta
        =
        \frac{4d(R+1)\log n}{\varepsilon t}
        \le
        \frac{4C_d(R+1)\log n}{\varepsilon c}.
    \]
    Moreover, since \(t \le d/(1+\varepsilon)\) and \(\widehat E_j \le R\log n\),
    \[
        0
        \le
        \frac{2\widehat E_j}{\beta}
        \le
        \frac{R\varepsilon}{2(R+1)(1+\varepsilon)}
        <
        \frac{1}{2}.
    \]
    Consequently, there is a constant \(C_B\) depending only on \(C_d\), \(\varepsilon\), \(c\), and \(R\) such that \(\|\mK\|_{\ell_\infty} \le C_B\sqrt{\log n}\) and \(\|\vq\|_{\ell_\infty} \le C_B\sqrt{\log n}\).
    
    By construction, the logit between the query and key \(j\in[n]\) is
    \begin{align} \label{eq:sparse-gadget:logit}
        \widehat Z_j
        &\coloneqq
        \frac{\mK_j^\top\vq}{2d}
        =
        \frac{\beta}{2d}\Bigl( \langle\va,\vb_j\rangle + \frac{2d\widehat E_j}{\beta} \Bigr)
        =
        \frac{\beta}{2d} \langle\va,\vb_j\rangle + \widehat E_j \notag
        \\&=
        \frac{\beta}{4d} \bigl( \|\va\|_2^2+\|\vb_j\|_2^2-\|\va-\vb_j\|_0 \bigr) + \widehat E_j \notag
        =
        \frac{\beta}{4d} \bigl( d-\|\va-\vb_j\|_0 \bigr) + \widehat E_j
        \\&=
        \frac{\beta}{4} \Bigl( 1-\frac{\|\va-\vb_j\|_0}{d} \Bigr) + \widehat E_j,
    \end{align}
    where we use that \(\|\va\|_0=\|\vb\|_0=d/2\).
    Suppose now there is some close partner for \(\va\) at index \(i\in[n]\) with \(\|\va-\vb_i\|_0 \le t\).
    Then, for all far vectors at index \(j\in[n]\) with \(\|\va-\vb_j\|_0 > (1+\varepsilon)t\),
    \begin{align*}
        \widehat Z_i-\widehat Z_j
        &\overset{\eqref{eq:sparse-gadget:logit}}{=}
        \frac{\beta}{4d} \bigl( \|\va-\vb_j\|_0-\|\va-\vb_i\|_0 \bigr) + \widehat E_i-\widehat E_j
        \\&\ge
        \frac{\beta\varepsilon t}{4d} + \widehat E_i-\widehat E_j
        \\&\ge
        (R+1)\log n - R\log n
        >
        0,
    \end{align*}
    so logits of close partners are separated from those of far partners. In particular,
    \begin{equation} \label{eq:sparse-reduction:maximizer}
        j^\star
        \coloneqq
        \argmax_{j\in[n]} \widehat Z_j
        \in
        \bigl\{
            j\in[n] : \|\va-\vb_j\|_0 \le (1+\varepsilon)t
        \bigr\}.
    \end{equation}

    Now, compare this maximal logit at position \(j^\star\) against the other logits.
    Applying \Cref{lem:exponential-separation} with \(\theta=\gamma/2\), \(L=R\log n\), and \(g=2\log n\) gives
    \[
        \Pr\Bigl[ \forall j^\star \neq j\in[n] : \widehat Z_{j^\star} - \widehat Z_j \ge 2\log n \Bigr]
        \ge n^{-\gamma} - n^{-2}
        \ge \frac{1}{2} n^{-\gamma}
    \]
    for all sufficiently large \(n\).
    On the same event,
    \[
        1-\softmax_{j^\star}(\vq\mK^\top/(2d))
        =
        \frac{\sum_{j^\star \neq j\in [n]} e^{\widehat Z_j}}{e^{\widehat Z_j} + \sum_{j^\star \neq j\in [n]} e^{\widehat Z_j}}
        \le
        \frac{(n-1) e^{\widehat Z_j - 2\log n}}{e^{\widehat Z_j}}
        =
        \frac{n-1}{n^2}
        \le
        \frac{1}{n},
    \]
    which proves the claim.
\end{proof}

\subsection[Proof of Theorem~\ref*{thm:sparse-hardness}]{Proof of \Cref{thm:sparse-hardness}} \label{subsec:proofs:sparse-attention}
We can now prove \Cref{thm:sparse-hardness} by applying \Cref{lem:sparse-gadget} to the closest pair problem from \Cref{def:preprocessed-gap-bcp}.
The construction identifies an approximate closest pair by letting its key receive almost all the attention for a given query, with some probability decreasing polynomially in the input size \(n\). To ensure a large success probability, we construct and preprocess sufficiently many independent copies, which increases preprocessing only polynomially and can be chosen so that the total decoding time remains truly sublinear. On a successful copy, every valid sparse-attention output must contain the dominant key.

\getkeytheorem{thm:sparse-hardness}
\begin{proof}
    Assume w.l.o.g. that \(k>1\) and \(q \in (0,1/2)\).
    We choose \(C_d,\varepsilon>0\) as the constant from \Cref{pro:preprocessed-gap-bcp-hardness} so that \(\PreGapBCP(n,1,d,\varepsilon)\) cannot be solved with \(\bigO(n^{k+1})\) preprocessing time and \(\bigO(n^{1-q/2})\) decoding time for \(d\coloneqq \frac{C_d}{2}\log n\).
    Suppose, for contradiction, that \(\SparseAtt(n,D,B,\alpha)\) can be solved with \(\bigO(n^k)\) preprocessing time and \(\bigO(n^{1-q})\) decoding time.

    We construct an algorithm for \(\PreGapBCP(n,1,d,\varepsilon)\) with \(\bigO(n^{k+1})\) preprocessing time and \(\bigO(n^{1-q/2})\) query time, contradicting \Cref{pro:preprocessed-gap-bcp-hardness}.
    To that end, let a threshold \(t\in[0,d]\) and a set \(\sB=\{\vb_1,\ldots,\vb_n\} \subseteq \sM_d\) be an input to \(\PreGapBCP(n,1,d,\varepsilon)\), together with a query \(\va\in\sM_d\) revealed after preprocessing.
    Applying \Cref{lem:small-t-bcp} to \(C_d\) and \(q\) gives a constant \(c\in(0,C_d/2)\) so that we can decide the \(\PreGapBCP\) instance with \(\bigO(nd)\le\bigO(n^k)\) preprocessing time and \(\bigO(n^{1-q})\) decoding time if \(t\le c\log n\).
    We may there assume that \(t>c\log n\).
    We may further assume that \((1+\varepsilon)t\le d\).
    Indeed, if \((1 + \varepsilon) t > d\), then every conceivable pair \((\va,\vb) \in \{0,1\}^d \times \sB\) has Hamming distance at most \(d < (1 + \varepsilon) t\), so the problem is trivial to decide.

    We show that the \(\PreGapBCP\) instance can be decided by finding a small set of keys in \(\mK\) receiving at least an \(\alpha\)-fraction of the attention for the query \(\vq\).
    After seeing only \(t\) and \(\sB\), we apply \Cref{lem:sparse-gadget} with parameters \(\varepsilon\), \(c\), \(C_d\), and \(\gamma=q/4\).
    This supplies the constant \(C_B>0\) and allows us to sample, during preprocessing and in \(\bigO(Tnd) \le \bigO(n^{k+1})\) time, \(T\coloneqq C_Tn^\gamma\) independent key matrices \(\mK^{(1)},\ldots,\mK^{(T)} \in [-B,B]^{n\times D}\), which we preprocess with the hypothetical preprocessing algorithm for sparse attention in \(\bigO(Tn^k) \le \bigO(n^{k+1})\) time.

    After that, let the vector \(\va\in\sM_d\) be revealed.
    Now, \Cref{lem:sparse-gadget} allows us to construct, in \(\bigO(Td) \le \bigO(n^{1-q})\) time, the queries \(\vq^{(1)},\ldots,\vq^{(T)} \in [-B,B]^D\) corresponding to the key matrices and guaranteeing that
    \begin{equation} \label{eq:sparse-hardness:success-probability}
        \Pr\Bigl[
            \exists j^\star\in[n]:
            \|\va-\vb_{j^\star}\|_0 \le (1+\varepsilon)t
            \;\text{ and }\;
            \softmax_{j^\star}\bigl(\mK^{(\ell)}\vq^{(\ell)}/D\bigr)>1-\frac{1}{n}
        \Bigr]
        \ge
        \frac{1}{2} n^{-\gamma}
    \end{equation}
    for each copy \(\ell\in[T]\) if \(\min_{j\in[n]}\|\va-\vb_j\|_0 \le t\).
    Again for each copy, we run the hypothetical decoding algorithm for sparse attention, resulting in explicit sets \(\sI^{(1)},\ldots,\sI^{(T)} \subseteq [n]\).
    Each invocation runs in \(\bigO(n^{1-q})\) time by assumption, so, in particular, \(|\sI^{(\ell)}| = \bigO(n^{1-q})\) for each \(\ell\in[T]\).
    For each copy \(\ell\in[T]\) and each index \(i\in\sI^{(\ell)}\), we check if \(\|\va-\vb_i\|_0 \le (1+\varepsilon)t\).
    If we find such an index \(i\), we accept the \(\PreGapBCP\) instance.
    Otherwise, reject the instance.

    The algorithm never falsely accepts an instance.
    If, however, \(\min_{j\in[n]}\|\va-\vb_j\|_0 \le t\), then it follows from \eqref{eq:sparse-hardness:success-probability} that only with probability
    \[
        \Bigl(1-\frac{1}{2}n^{-\gamma}\Bigr)^{C_T n^{\gamma}}
        \le
        e^{-C_T/2}
    \]
    no copy contains an index receiving more than \(1-\frac{1}{n}\) attention.
    Conversely, we can choose \(C_T\) as a sufficiently large absolute constant so that, with probability at least \(2/3\), some copy \(\ell\in[T]\) has an index \(i\in[n]\) with
    \[
        \softmax_i\bigl(\mK^{(\ell)}\vq^{(\ell)}/D)>1-\frac{1}{n}.
    \]
    On such a copy, assuming \(n\) is sufficiently large, \(\sI^{(\ell)}\) can only receive at least \(\alpha\) attention if it includes the index \(i\).
    Consequently, the hypothetical decoding algorithm for sparse attention guarantees \(i\in\sI^{(\ell)}\), so our algorithm for \(\PreGapBCP\) accepts.

    Put together, we obtain a bounded-error algorithm for \(\PreGapBCP(n,1,d,\varepsilon)\).
    The algorithm uses \(\bigO(n^{k+1})\) time during preprocessing and \(\bigO(n^\gamma n^{1-q}d) \le \bigO(n^{1-q/2})\) time during decoding.
    That contradicts \Cref{pro:preprocessed-gap-bcp-hardness} and thereby proves the claim.
\end{proof}

\end{document}